\definecolor{RoyalBlue}{cmyk}{1, 0.50, 0, 0}
\definecolor{ForestGreen}{cmyk}{0.864, 0.0, 0.429, 0.396}
\definecolor{Brown}{cmyk}{0.0,0.692,0.925,0.529}
\newcommand{\Chal}{\mathsf{Chal}}
\newcommand{\AdvRisk}{\mathsf{AdvRisk}}
\newcommand{\metric}{\mathbf{d}}
\newcommand{\HD}{\mathsf{HD}}
\newcommand{\C}{\cC}
\newcommand{\poly}{\mathrm{poly}}
\newcommand{\problem}{\ensuremath{\mathsf{P}}\xspace}
\newcommand{\Risk}{\mathsf{Risk}}
\newcommand{\fhat}[2]{\ifthenelse{\equal{#2}{}}{\hat{f}[#1]}{\ifthenelse{\equal{#2}{0}}{\hat{f}[\emptyset]}{\hat{f}[#1_{\leq #2}]}}}
\newcommand{\gain}[2]{\ifthenelse{\equal{#2}{}}{g[#1]}{g[#1_{\leq #2}]}}
\newcommand{\pr}[2][]{\Pr_{\ifthenelse{\isempty{#1}}{}{{#1}}}\left[{#2}\right]}
\newcommand{\verify}{\mathsf{Verify}}
\newcommand{\encode}{\mathsf{Encode}}
\newcommand{\decode}{\mathsf{Decode}}
\newcommand{\AND}{\mathsf{AND}}
\newcommand{\sk}{\mathsf{sk}}
\newcommand{\vk}{\mathsf{vk}}
\newcommand{\Adv}{\mathsf{A}}
\newcommand{\adv}{\Adv}
\newcommand{\remove}[1]{}
\newcommand{\ol}{\overline}
\newcommand{\set}[1]{\{ #1 \}}
\newcommand{\bits}{\{0,1\}}
\newcommand{\To}{\mapsto}
\newcommand{\N}{{\mathbb N}}
\newcommand{\cA}{{\mathcal A}}
\newcommand{\cC}{{\mathcal C}}
\newcommand{\cD}{{\mathcal D}}
\newcommand{\cH}{{\mathcal H}}
\newcommand{\cQ}{{\mathcal Q}}
\newcommand{\cS}{{\mathcal S}}
\newcommand{\cX}{{\mathcal X}}
\newcommand{\cY}{{\mathcal Y}}
\newcommand{\sfQ}{\mathsf{Q}}
\newcommand{\eps}{\varepsilon}
\newcommand{\polylog}{\operatorname{polylog}}
\newcommand{\KGen}{\operatorname{KGen}}
\newcommand{\Sign}{\operatorname{Sign}}
\newcommand{\negl}{\operatorname{negl}}
\newcommand{\SAT}{\mathrm{SAT}}
\newcommand{\class}[1]{\ensuremath{\mathbf{#1}}}
\newcommand{\NP}{\ensuremath{\class{NP}}}
\newtheorem{claim}[theorem]{Claim}
\newtheorem{construction}[theorem]{Construction}
\newcommand{\sdotfill}{\textcolor[rgb]{0.8,0.8,0.8}{\dotfill}} 
\newtheorem{proto}[theorem]{Protocol}
\newtheorem{protoc}[theorem]{Protocol}
\newcommand{\namedref}[2]{#1~\ref{#2}}
\newcommand{\torestate}[3]{%
\expandafter \def \csname BBRESTATE #2 \endcsname{#3}
\theoremstyle{plain}
\newtheorem{BBRESTATETHMNUM#2}[theorem]{#1}
\begin{BBRESTATETHMNUM#2}\label{#2}\csname BBRESTATE #2 \endcsname   \end{BBRESTATETHMNUM#2}
\newtheorem*{BBRESTATETHMNONNUM#2}{\namedref{#1}{#2}}
}
\newcommand{\restate}[1]{\begin{BBRESTATETHMNONNUM#1}[Restated] \csname BBRESTATE #1 \endcsname
\end{BBRESTATETHMNONNUM#1}}
\newcommand{\X}{\ensuremath{\mathcal{X}}\xspace} 
\newcommand{\Y}{\ensuremath{\mathcal{Y}}\xspace} 
\renewcommand{\H}{\ensuremath{\mathcal H}\xspace} 
\newcommand{\D}{\cD}
\newcounter{definitioncnt}
\newcounter{thmcnt}
\newcounter{prblm}
\newcommand{\ifcomments}{\iffalse}
\newcommand{\MohNote}[1]{\ifcomments{\color{red} [{\bf Mohammad:}  #1]}\fi}
\newcommand{\Mnote}[1]{\MohNote{#1}}
\newcommand{\sanjam}[1]{\ifcomments{\color{red} [{\bf Sanjam:}  #1]}\fi}
\newcommand{\Snote}[1]{\ifcomments{\color{red} [{\bf Saeed:}  #1]}\fi}
\newcommand{\SomeshNote}[1]{{\ifcomments\color{ForestGreen} [\bf {Somesh:}  #1]}\fi}
\newcommand{\Hardrobust}{Robust-hard }
\newcommand{\hardrobust}{robust-hard }
\begin{document}
\ShortHeadings{Adversarially Robust Learning Could Leverage Computational Hardness}{}

\firstpageno{1}

\title{Adversarially Robust Learning Could Leverage \\ Computational Hardness}

\author{\name Sanjam Garg \email sanjamg@berkeley.edu \\
       \addr UC Berkeley
       \AND
       \name Somesh Jha \email  jha@cs.wisc.edu\\
       \addr University of Wisconsin Madison
       \AND
       Saeed Mahloujifar \email saeed@virginia.edu\\
       \addr University of Virginia
       \AND
       Mohammad Mahmoody \email mohammad@virginia.edu\\
       \addr University of Virginia}
      
\editor{}

\maketitle

\begin{abstract}
Over recent years, devising classification algorithms that are robust to adversarial perturbations has emerged as a challenging problem. In particular, deep neural nets (DNNs) seem to be susceptible to small imperceptible changes over test instances. However, the line of work in \emph{provable} robustness, so far, has been focused on \emph{information theoretic} robustness, ruling out even the \emph{existence} of any adversarial examples. In this work, we study whether there is a hope to benefit from \emph{algorithmic} nature of an attacker that searches for adversarial examples, and ask whether there is \emph{any}  learning task for which it is possible to design classifiers that are only robust against \emph{polynomial-time} adversaries.
Indeed, numerous cryptographic tasks (e.g. encryption of long messages) can only be secure against computationally bounded adversaries, and are indeed \emph{impossible} for computationally unbounded attackers. Thus, it is natural to ask if the same strategy could help robust learning.

We show that computational limitation of attackers can indeed be useful in robust learning by demonstrating the possibility of a classifier for some learning task  for which computational and information theoretic adversaries of bounded perturbations have very different power. Namely, while computationally unbounded adversaries can attack successfully and find adversarial examples with small perturbation, polynomial time adversaries are unable to do so   unless they can break standard cryptographic hardness assumptions.  Our results, therefore, indicate that perhaps a similar approach to cryptography (relying on computational hardness) holds promise for achieving computationally robust machine learning. 
On the reverse directions, we also show that the existence of such learning task in which computational robustness beats information theoretic robustness requires computational hardness by implying (average-case) hardness of $\NP$.
\end{abstract}



\section{Introduction}

Designing classifiers that are robust to small perturbations to test instances has emerged as a challenging task in machine learning. The goal of robust learning is to design classifiers $h$ that still correctly predicts the true label even if the input $x$ is  perturbed minimally to a ``close'' instance $x'$. In fact, it was shown \citep{Szegedy:intriguing,Evasion:TestTime,Adversarial::Harnessing} that many learning algorithms, and in particular DNNs, are highly vulnerable to such small perturbations and thus adversarial examples can be successfully
found. Since then, the machine learning community has been actively engaged  to address this problem with many new defenses \citep{papernot2016distillation,madry2017towards,biggio2018wild} and novel and 
powerful attacks \citep{carlini2017towards,athalye2018obfuscated}.

\paragraph{Do adversarial examples  always \emph{exist}?} This state of affairs suggest that perhaps the existence of adversarial example is due to fundamental reasons that might be  inevitable. A sequence of work \citep{gilmer2018adversarial,fawzi2018adversarial,diochnos2018adversarial,mahloujifar2018curse,shafahi2018adversarial,dohmatob2018limitations} show that for natural theoretical distributions (e.g., isotropic  Gaussian of dimension $n$) and natural metrics over them (e.g., $\ell_0, \ell_1$ or $\ell_2$), adversarial examples are inevitable. Namely, the concentration of measure phenomenon \citep{ledoux2001concentration,milman1986asymptotic} in such metric probability spaces imply that small perturbations are enough to map almost all the instances $x$ into a close $x'$ that is misclassified. This line of work, however, does not yet say anything about ``natural'' distributions of interest such as images or voice, as the precise nature of such distributions are yet to be understood.

\paragraph{Can lessons from cryptography help?} Given the pessimistic state of affairs,   researchers have asked if we could use lessons from cryptography to make progress on this problem \citep{MadryTalk,ShafiTalk,mahloujifar2018can}. Indeed, numerous cryptographic tasks (e.g. encryption of long messages) can only be realized against attackers that are computationally bounded. In particular, we know that all encryption methods that use a short key to encrypt much longer messages are insecure against computationally unbounded adversaries. However, when restricted to computationally bounded adversaries this task becomes feasible and suffices for numerous settings. This insight has been extremely influential in cryptography. Nonetheless, despite attempts to build on this insight in the learning setting, we have virtually no evidence on whether this approach is promising. Thus, in this work we study the following question:

\begin{quote}
\emph{Could we  hope to leverage computational hardness for the benefit of adversarially robust learning by rendering successful attacks computationally infeasible?}
\end{quote}

Taking a step in realizing this vision, we provide formal definitions for \emph{computational} variants of robust learning. Following the cryptographic literature, we provide a game based definition of computationally robust learning. Very roughly, a game-based definition consists of two entities: a \emph{challenger} and an \emph{attacker}, that interact with each other. In our case, as the first step the challenger generates independent samples from the distribution at hand, use those samples to train a learning algorithm, and obtain a hypothesis $h$. Additionally, the challenger samples a fresh challenge sample $x$ from the underlying distribution. Next, the challenger provides the attacker with oracle access to $h(\cdot)$ and $x$. At the end of this game, the attacker outputs a value $x'$ to the challenger. The attacker declares this execution as a ``win'' if $x'$ is obtained as a small perturbation of $x$ and leads to a misclassification. We say that the learning is computationally robust as long as no attacker from a class of adversaries can ``win'' the above game with a probability much better than some base value.   
(See Definition \ref{def:game}.)
This definition is very general and it implies various notions of security by restricting to various classes of attackers. While we focus on polynomially bounded attackers in this paper, we remark that one may also naturally consider other natural classes of attackers based on the setting of interest (e.g. an attacker that can only
modify certain part of the image).
\paragraph{What if adversarial examples are actually easy to find?} 
\citet{mahloujifar2019can}  studied this question, and showed that as long as the input instances come from a product distribution, and if the distances are measured in Hamming distance, adversarial examples with sublinear perturbations can be found in polynomial time. This result, however, did not say anything about other
distributions or  metrics such as $\ell_\infty$. Thus, it was left open whether computational hardness could be leveraged in any learning problem to guarantee its robustness.

\subsection{Our Results}
\paragraph{From computational hardness to computational robustness.} In this work, we show that computational hardness can indeed  be leveraged to help robustness. In particular, we present a learning problem $\problem$ that has a classifier $h_\problem$ that is \emph{only} computationally robust. In fact, let $\sfQ$ be \emph{any} learning problem   that has a classifier   with  ``small'' risk $\alpha$, but that adversarial examples  \emph{exist}  for classifier $h_\sfQ$ with higher probability $\beta \gg \alpha$ under the $\ell_0$ norm (e.g., $\sfQ$ could be \emph{any} of the well-studied problems in the literature with a vulnerable classifier $h_\cQ$ under norm $\ell_0$). Then, we show that there is a ``related'' problem $\problem$ and a related classifier $h_\problem$ that has \emph{computational} risk (i.e., risk in the presence of  \emph{computationally bounded} tampering adversaries) at most $\alpha$, but the risk of $h_\problem$ will go up all the way to  $\approx \beta$ if the tampering attackers are allowed to be computationally unbounded. Namely, computationally bounded adversaries have a much smaller chance  of finding adversarial examples of small perturbations for $h_\problem$ than computationally unbounded attackers do. (See Theorem \ref{thm:comprobhyp}.)

The computational robustness of the above construction relies on allowing the hypothesis to sometimes ``detect'' tampering and output a special symbol $\star$. The goal of the attacker is to make the hypothesis output a wrong label and \emph{not} get detected. Therefore, we have proved, along the way, that allowing tamper detection can also be useful for robustness. Allowing tamper detection, however, is not always an option. For example a real-time decision making classifier (e.g., classifying a traffic sign) that \emph{has to} output a label, even if it detects that something might be suspicious about the input image. We prove that even in this case, there is a learning problem $\problem$ with binary labels and a classifier $h$ for $\problem$ such that computational risk of $h$ is almost zero, while its information theoretic risk is $\approx 1/2$, which makes classifiers' decisions under attack meaningless. (See  Theorem \ref{thm:comprobhyp3}).

\sanjam{Do you think that we might be able to sell the fact that it might actually be possible to reasonably modify normal learning problems Q to $P_Q$ such that this holds. For example, every time a camera takes a picture it can generate actually publish the signed version of that picture. Or if you expect certain signs to read by a ML algorithm then you attach a signature along with it. The fact that it can be for any public key sampled at that moment but not a fixed public key is not that obvious.}\Snote{I think we can sell this fact. For instance if we have a cloud based face recognition and the photos taken by camera are signed by an specific signing key that is secured in hardware? This reduces the problem to physical attacks e.g adversary wearing a mask.}

\paragraph{Extension: existence of learning \emph{problems} that are computationally  robust.} Our result above applies to certain classifiers that ``separate'' the power of computationally bounded vs. that of computationally unbounded attackers. Doing so, however, does not rule out the possibility of finding information theoretically robust classifiers for the \emph{same} problem.
So, a natural question is: can we extend our result to show the existence of learning tasks for which \emph{any} classifier is vulnerable to unbounded attackers, while computationally robust classifiers for that task exist? At first, it might look like an impossible task, in ``natural'' settings, in which the ground truth function $c$ itself is robust under the allowed amount of perturbations. (For example, in case of image classification, Human is the robust ground truth). Therefore, we cannot simply extend our result in this setting to rule out the existence of robust classifiers, since they might simply exist (unless one puts a limits on the complexity of the learned model, to exclude the ground truth function as a possible hypothesis).

However, one can still formulate the question above in a meaningful way as follows: Can we have a learning task  for which any \emph{polynomial time} learning algorithm  (with polynomial sample complexity) is forced to produce (with high probability) hypotheses with low robustness against unbounded attacks? Indeed, in this work we also answer this question affirmatively, as a corollary to our main result, by also relying on recent results proved in recent exciting works of \citep{bubeck2018adversarial1,bubeck2018adversarial2,degwekar2019computational}.

In summary, our work provides credence that perhaps  restricting attacks to computationally bounded adversaries holds promise for achieving computationally robust machine learning that relies on computational hardness assumptions as is currently done in cryptography.

\paragraph{From computational robustness  back to computational hardness.} Our first result shows that computational hardness can be leveraged in some cases to obtain nontrivial computational robustness that beats information theoretic robustness. But how about the reverse direction; are computational hardness assumptions necessary for this goal? We also prove such reverse direction and show that nontrivial computational robustness implies computationally hard problems in $\NP$. In particular, we show that a non-negligible gap between the success probability of computationally bounded vs. that of unbounded adversaries in attacking the robustness of classifiers implies strong average-case hard distributions for class $\NP$. Namely, we prove  that if the distribution $D$ of the instances in learning task is efficiently samplable, and if a classifier $h$ for this problem has computational robustness $\alpha$, information theoretic robustness $\beta$, and $\alpha < \beta$, then  one can efficiently sample from a distribution $S$ that generates Boolean formulas $\phi \gets S$ that are satisfiable with overwhelming probability, yet no efficient algorithm can find the satisfying assignments of $\phi \gets S$ with a non-negligible probability. (See Theorem \ref{thm:NP} for the formal statement.) 
\SomeshNote{Can these computationally hard problems in NP be used to construct cryptographic primitives? If so, we should say so.} \MohNote{Unfortunately no.}

\paragraph{What world do we live in?} As explained above, our main question is whether adversarial examples could be prevented by relying on computational limitations of the adversary. In fact, even if adversarial examples exist for a classifier, we might be living in either of two completely different worlds. One is a world in which computationally unbounded adversaries can find adversarial examples (almost) whenever they exist and they would be as powerful as information-theoretic adversaries. Another world is one in which machine learning could leverage computational hardness. Our work suggests that computational hardness can potentially help robustness for certain learning problems; thus, we are living in the better world. Whether or not we can achieve \emph{computational} robustness for \emph{practical} problems (such as image classification) that beats  their information-theoretic robustness remains an intriguing open question. A related line of work~\citep{bubeck2018adversarial1,bubeck2018adversarial2,degwekar2019computational} studied other ``worlds'' that we might be living in, and studied whether adversarial examples are due to the computational hardness of \emph{learning} robust classifiers. They designed learning problems demonstrating that in some worlds, robust classifiers might exist, while they are hard to be obtained efficiently. We note however, that the goal of those works and our work are quite different. They deal with how computational constraints might be an issue and \emph{prevent} the learner from reaching its goal, while our focus is on how such constraints on adversaries can \emph{help} us achieve robustness guarantees that are not achievable information theoretically.

\paragraph{What does our result say about robustifying other natural learning tasks?} Our results only show the \emph{existence} of a learning task for which computational robustness is very meaningful. So, one might argue that this is an ad hoc phenomenon that might not have an impact on other practical problems (such as image classification). However, we emphasize that prior to our work, there was \emph{no} provable evidence that computational hardness can play any positive role in robust learning. Indeed, our results also shed light on how computational robustness can potentially be applied to other, perhaps more natural learning tasks.
The reason is that the space of all possible ways to tamper a high dimensional vector is \emph{exponentially} large. Lessons from cryptography, and the construction of our learning task proving our main result, suggest that, in such cases, there is potentially a huge gap between the power of computationally bounded vs. unbounded search algorithms. On the other hand, there are methods proposed by researchers that \emph{seem} to resist attacks that try to find adversarial examples \citep{madry2017towards}, while the certified robustness literature is all focused on modeling the adversary as a computationally unbounded entity who can find adversarial examples within a certain distance, so long as they exist \citep{raghunathan2018certified,wong2018provable,sinha2018certifiable,wong2018scaling}. Our result shows that, perhaps we shall start to consider \emph{computational} variants of certification methods that focus on computationally bounded adversaries, as by doing so we might be able to prove better robustness bounds for methods that are designed {already}.

\paragraph{Other related work.} In another line of work~\citep{raghunathan2018certified,wong2018provable,sinha2018certifiable,wong2018scaling}   the notion of \emph{certifiable} robustness was developed to prove robustness for   individual test instances. More formally, they aim at providing
robustness certificates with bounds $\eps_x$ along with a decision $h(x)$ made on a test instance $x$, with the guarantee that any $x'$ at distance at most $\eps_x$ from $x$ is correctly classified. However, these guarantees, so far, are not strong enough to rule out attacks completely, as larger magnitudes of perturbation (than the levels certified) still can fool the classifiers while the instances look the same to the human.
\SomeshNote{Can we cite something for this?} \MohNote{citation is the same set of papers; I wrote the sentence again to minimize the attack surface :)}


\subsubsection{Techniques}
We prove our main result about the possibility of computationally robust classifiers (Theorem \ref{thm:comprobhyp}) by ``wrapping'' an arbitrary learning problem $\sfQ$ with a vulnerable classifier  by adding \emph{computational} certification based on cryptographic digital signatures to test instances. A digital signature scheme (see Definition \ref{def:sign}) operates based on  two generated keys $(\vk,\sk)$, where $\sk$ is private and is used for \emph{signing} messages, and $\vk$ is public and is used for \emph{verifying} signatures. Such schemes come with the guarantee that a computationally bounded adversary with the knowledge of $\vk$ cannot sign new messages on its own, even if it is given signatures on some previous messages. Digital signature schemes can be constructed based on the assumption that one-way functions exist.\footnote{Here, we need signature schemes with ``short'' signatures of poly-logarithmic length over the security parameter. They could be constructed based on  exponentially hard one-way functions \citep{rompel1990one} by picking the security parameter sub-exponentially smaller that usual and using universal one-way hash functions to hash the message to poly-logarithmic length..} Below we describe the ideas behind this result in two steps.

{\em \underline{Initial Attempt}.} Suppose $D_\sfQ$ is  the distribution over $\X \times \Y$ of a learning problem $\sfQ$ with input space $\X$ and label space $\Y$. Suppose $D_\sfQ$ had a hypothesis $h_\sfQ$ that can predict correct labels reasonably well, $\Pr_{(x,y)\gets D_\sfQ}[h(x)\neq y] \leq \alpha$. Suppose, at the same time, that a (perhaps computationally unbounded) adversary $\adv$ can perturb test instances like $x$ into a close adversarial example $x'$ that is now likely to be misclassified by $h_\sfQ$, 
\begin{equation*}
    \Pr_{(x,y)\gets D_\sfQ}[h(x')\neq y; x'=\adv(x)] \geq \beta \gg \alpha.
\end{equation*} 
Now we describe a related problem $\problem$, its distribution  of examples $D_\problem$,  and a classifier $h_\problem$ for $\problem$. To sample an example from $D_\problem$, we first sample $(x,y) \gets D_\sfQ$ and then modify $x$ to $\ol{x}=(x,\sigma_x)$ by attaching a \emph{short} signature $\sigma_x=\Sign(\sk,x)$ to $x$. The label $y$ of $\ol{x}$ remains the same as that of $x$. Note that $\sk$ will be kept secret to the sampling algorithm of $D_\problem$. The new classifier $h_\problem$ will rely on the public parameter $\vk$ that is available to it. Given an input $\ol{x}=(x,\sigma_x)$, $h_\problem$ first checks its integrity by verifying that the given signature $\sigma_x$ is valid for $x$. If the signature verification does not pass, $h_\problem$ rejects the input as adversarial without outputting a label, but if this test passes,  $h_\problem$ outputs $h_\sfQ(x)$. 

To successfully find an adversarial example $\ol{x}'$ for $h_\problem$   through a small perturbation of $\ol{x}=(x,\sigma)$ sampled as $(\ol{x},y)\gets D_\problem$ , an adversary $\adv$ can pursue  either of the following strategies. \textbf{(I)} One strategy is that $\adv$ tries to find a new signature $\sigma' \neq \sigma_x$ for the same $x$, which will constitute as a sufficiently small perturbation as the signature is short. Doing so, however, is not considered a successful attack, as the label of $\ol{x}'$ remains the same as that of the true label of the untampered point $\ol{x}$. \textbf{(II)} Another strategy is to perturb the  $x$ part of $\ol{x}$ into a close instance $x'$ and then trying to find a correct signature $\sigma'$ for it, and outputting $\ol{x}'=(x',\sigma')$. Doing so would be a successful attack, because the signature is short, and thus $\ol{x}'$ would indeed be a close instance to $\ol{x}$. However, doing this is computationally infeasible, due to the very security definition of the signature scheme.  Note that $(x'\sigma')$ is a forgery for the signature scheme, which a computationally
bounded adversary cannot construct because of the security of the underlying signature scheme. This means that the \emph{computational} risk of $h_\problem$ would remain at most $\alpha$.

We now observe that information theoretic (i.e., computationally unbounded) attackers can succeed in finding adversarial examples for $h_\problem$ with probability at least $\beta \gg \alpha$. In particular, such attacks can first find an adversarial example $x'$ for $x$ (which is possible with probability $\beta$ over the sampled $x$), construct a signature $\sigma'$ for $x'$, and then output $(x',\sigma')$. Recall  that an unbounded adversary can 
construct a signature $\sigma'$ for $x'$ using exhaustive search.

{\em \underline{Actual construction}.} One main issue with the above construction is that it needs to make $\vk$ publicly available, as a public parameter to the hypothesis (after it is sampled as part of the description of the distribution $D_\problem$). Note that it is computationally hard to  construct the hypothesis described above without knowing $vk$. The problem with revealing $vk$ to the learner is that the distribution of examples should come with some extra information other than samples. However, in the classical definition of a learning problem, the learner only has access to samples from the distribution. In fact, if we were allowed to pass some extra information to the learner, we could pass the description of a robust classifier (e.g. the ground truth) and the learning task becomes trivial. The other issue is that the distribution $D_\problem$ is not  \emph{publicly samplable} in polynomial time, because to get a sample from $D_\problem$ one needs to use the signing key $\sk$, but that key is kept secret. We resolve these two issues with two more ideas. 
The first idea is that, instead of generating \emph{one} pair of keys $(\vk,\sk)$ for $D_\problem$ and keeping $\sk_D$ secret,  we can generate a fresh pair of keys $(\vk_x,\sk_x)$ every time that we sample $(x,y) \gets D_\sfQ$ and attach $\vk_x$ also to the actual instance $\ol{x}=(x,\sigma_x,\vk_k)$. The modified hypothesis $h_\problem$ also uses this key and verifies $(x,\sigma_x)$ using $\vk_x$. This way, the distribution $D_\problem$ is publicly samplable, and moreover, there is no need for making $\vk$ available as a public parameter. However, this change of the distribution $D_\problem$ introduces a new possible way to attack the scheme and to find adversarial examples.  In particular, now the adversary can try to perturb $\vk_x$ into a close string $\vk'$ for which it knows a corresponding signing key $\sk'$, and then use $\sk'$ to sign an adversarial example $x'$  for $x$ and output $(x',\sigma',\vk')$. However, to make this attack impossible for the attacker under small perturbations of instances, we use error correction codes and employ an encoding $[\vk_x]$ of the verification key (instead of $\vk_x$) that needs too much change before one can fool a decoder to decode to any other $\vk'\neq \vk_x$. But as long as the adversary cannot change $\vk_x$, the adversary cannot attack the robustness computationally. (See Construction~\ref{const:Learningfamily}.)

To analyze the construction above (see Theorem \ref{thm:comprobhyp}), we note that the computationally bounded adversary would need to change $\Omega(|x|)$ number of bits in $(x,\sigma, [\vk])$ to get $(x',\sigma',[\vk'])$ where $x\neq x'$. This is because because the encoded $[\vk]$ would need $\Omega(|x|)$ number of perturbations to change the encoded $\vk$, and if $\vk$ remains the same it is hard computationally to find a valid signature. On the other hand, a computationally unbounded adversary can focus on perturbing $x$ into $x'$ and then forge a short signatures for it, which could be as small as $\poly(\log (|x|))$ perturbations.

\paragraph{Extension to problems, rather than specific classifiers for them.} Note that the construction above could be wrapped around any learning problem. In particular, we can pick an original problem that is not (information theoretically) robustly learnable in polynomial time. These problems, which we call them \hardrobust are studied recently in \citep{bubeck2018adversarial1} and \citep{degwekar2019computational} where they construct such \hardrobust problems to show the effect of computational limitation in robust learning (See Definition \ref{def:hardrobust} and \ref{thm:hardrobust}). Here, using their construction as the original learning problem, and wrapping it with our construction, we can strengthen our result and construct a learning problem that is not robustly learnable by \emph{any} polynomial time learning algorithm, yet it has a classifier that is computationally robust. See Corollary \ref{cor:comprob_any} for more details.

\paragraph{Computational robustness without tamper detection.} The computational robustness of the constructed classifier relies on sometimes detecting tampering attacks and not outputting a label. We give an alternative construction  for a setting that the classifier \emph{always} has to output a label. We again use digital signatures and error correction codes as the main ingredient of our construction but in a different way. The main difference is that we have to repeat the signature multiple times to prevent the adversary from changing all of the signatures. The caveat of this construction is that it is no longer a wrapper around an arbitrary learning problem. See Construction \ref{const:Learningfamily3} for more details.


\section{Defining Computational Risk and Computationally Robust Learning}
\paragraph{Notation.} We  use calligraphic letters (e.g., $\cX$) for sets and capital non-calligraphic letters (e.g., $D$) for distributions. 
By $d \gets D$ we denote sampling $d$  from $D$. 
For a randomized algorithm $R(\cdot)$,  $y \gets R(x)$ denotes the randomized execution of $R$ on input $x$ outputting $y$.  
 A classification problem $\problem=(\X,\Y,D,\H)$ is specified by the following components: 
set $\X$ is the set  of possible \emph{instances}, \Y is the set of possible \emph{labels},
$D \in \D$ is a \emph{joint} distribution over $\X \times \Y$, and $\H$
is the space of hypothesis.  For simplicity  we work with problems that have a single distribution $D$ (e.g., $D$ is the distribution of labeled images from a data set like MNIST or CIFAR-10). A learner $L$ for problem $P$ is an algorithm that takes a dataset $\cS \gets D^m$ as input and outputs a hypothesis $h\in \cH$.
We did not state the loss function explicitly, as we work with classification problems and use the zero-one loss by default.
For a learning problem $\problem=(\X,\Y,D,\H)$, the \emph{risk} or \emph{error} of a hypothesis $h \in \H$ is $\Risk_\problem(h) = \Pr_{(x,y) \gets D}[h(x) \neq y]$. We are usually interested in learning problems $\problem=(\X,\Y,D,\H)$ with a specific metric $\metric$ defined over $\X$ for the purpose of defining adversarial perturbations of bounded magnitude controlled by $\metric$. In that case, we might simply write $\problem=(\X,\Y,D,\H)$, but  $\metric$ is implicitly defined over $\X$.  Finally, for a metric $\metric$ over $\X$, we let $\metric_b(x)=\set{x' \mid \metric(x,x') \leq b}$ be the ball of radius $b$ centered at $x$ under the metric $\metric$. By default, we work with Hamming distance $\HD(x,x') = |\set{i \colon x_i \neq x'_i}|$,
 but our definitions   can be adapted to any other metrics.
We usually work with \emph{families} of problems $\problem_n$ where $n$ determines the length of $x \in \X_n$ (and thus input lengths of $h \in \H_n,c\in \C_n, \metric_n$). 
We sometimes use a special notation $\Pr[{x \gets X}; E(x)]$ to define $\Pr_{x\gets X}[E(x)]$ that is the probability of and event $E$ over a random variable $X$. We also might use a combination of multiple random variables, for examples $\Pr[x\gets X; y\gets Y; E(x,y)]$ denotes the same thing as $\Pr_{\substack{ x\gets X, y\gets Y}}[E(x,y)]$. Order of sampling of $X$ and $Y$ matters  $Y$ might depend on $X$.

\paragraph{Allowing tamper detection.} In this work we expand the standard notion of hypotheses and allow $h \in \H$ to output a special symbol $\star$ as well (without adding $\star$ to $\Y$), namely we  have $h \colon \X \To \Y \cup \set{\star}$. This symbol can be used to denote ``out of distribution'' points, or any form of tampering. In natural scenarios, $h(x)\neq \star$ when $x$ is not an adversarially tampered instance. However, we allow this symbol to be output by $h$ even in no-attack settings as long as its probability is small enough.

We follow the tradition of game-based security definitions in cryptography \citep{naor2003cryptographic,shoup2004sequences,goldwasser2016cryptographic,rogaway2018simplifying}. Games are the most common way that security is defined in cryptography. These games are defined between
 a challenger $\Chal$ and an adversary $\adv$. Consider the case of a signature scheme. In this case 
 the challenger $\Chal$ is a signature scheme $\Pi$ and an adversary $\adv$ is given oracle access to the signing functionality
 (i.e. adversary can give a message $m_i$ to the oracle and obtains the corresponding signature $\sigma_i$).
 Adversary $\adv$ wins the game if he can provide a valid signature on a message that was not queried
 to the oracle. The security of the signature scheme is then defined informally as follows: any
 probabilistic polynomial time/size adversary  $\adv$ can win the game by probability that is bounded
 by a negligible $n^{-\omega(1)}$ function on the security parameter. We describe a security game for  tampering adversaries with bounded tampering budget in $\HD$, 
 but the definition is more general and can be used for other adversary classes.

\sanjam{add some background on game based notions} \Mnote{added citations to the beginning of paragraph above}

\begin{definition}[Security game of adversarially robust learning] \label{def:game}
Let $\problem_n=(\X_n,\Y_n,D_n,\H_n)$ be a  classification problem  where the components are parameterized by $n$. Let $L$ be a learning algorithm  with sample complexity $m=m(n)$ for $\problem_n$. Consider the following game between a challenger $\Chal$, and an adversary $\adv$ with tampering budget $b=b(n)$.
\begin{enumerate}
    \item $\Chal$ samples $m$ i.i.d. examples $\cS \gets D_n^m$ and  gets hypothesis $h \gets L(\cS)$ where $h \in \H_n$.
    \item $\Chal$ then samples a test example $(x,y) \gets D_n$ and sends  $(x,y)$ to the adversary $\adv$.
    \item Having  oracle (gates, in case of circuits) access  to hypothesis $h$ and a sampler for $D_n$, the adversary obtains the adversarial instance $x'\gets \adv^{h(\cdot),D_n}(x)$ and  outputs  $x'$.
\end{enumerate}
{\em Winning conditions:} In case $x=x'$, the adversary $\adv$ wins  if $h(x)\neq y$,\footnote{Note that, if $h(x)\neq y$, without loss of generality, the adversary  $\adv$ can output $x'=x$} and in case $x \neq x'$, the adversary wins if all the following hold:
\begin{inparaenum}[(1)]
    \item $\HD(x,x') \leq b$,
    \item $h(x') \neq y$, and
    \item $h(x') \neq \star$.
\end{inparaenum}
\end{definition}

\paragraph{Why separating winning conditions for $x=x'$ from $x\neq x'$?} One might wonder why we separate the winning condition for the two cases of $x=x'$ and $x\neq x'$. The reason is that $\star$ is  supposed to capture tamper detection. So, if the adversary does not change  $x$ and the hypothesis outputs $h(x)=\star$, this is an error, and thus should contribute to the risk. More formally, when we evaluate risk, we have $\Risk_{\problem}(h) = \Pr_{(x,y) \gets D}[h(x) \neq y]$, which implicitly means that outputting $\star$  contributes to the risk.
 However,  if adversary's perturbs to $x'\neq x$ leads to  $h(x')=\star$, it means the adversary has \emph{not} succeeded in its attack, because the tampering is detected.
In fact, if we simply require the other 3 conditions to let adversary win, the notion of ``adversarial risk'' (see Definition~\ref{def:advRiskComp}) might be even less than the normal risk, which is counter intuitive.




\paragraph{Alternative definitions of winning for the adversary.} The winning condition for the adversary could be defined in other ways as well. In our Definition~\ref{def:game}, the adversary wins if $\metric(x,x') \leq b$ and $h(x') \neq y$. This condition is inspired by the notion of corrupted input \citep{feige2015learning},  is extended to metric spaces  in \citep{madry2017towards}, and is used in and many subsequent works. An alternative definition for adversary's goal, formalized in \citep{diochnos2018adversarial} and used in \citep{gilmer2018adversarial,diochnos2018adversarial,bubeck2018adversarial2,degwekar2019computational} requires $h(x')$ to be different from the true label of $x'$ (rather than $x$). This condition requires the misclassification of $x'$, and thus, $x'$ would belong to the ``error-region'' of $h$. Namely, if we let $c(x)=y$ be the ground truth function, the error-region security game requires $h(x') \neq c(x')$. Another stronger definition of adversarial risk is given by \cite{suggala2018adversarial} in which the requirement condition requires both conditions: (1) the ground truth should not change $c(x)=c(x')$, and that (2) $x'$ is misclassified. For natural distributions like images or voice, where the ground truth  is robust to small perturbations, all these three  definitions for adversary's winning are equivalent.  

\paragraph{Stronger attack models.} In the attack model of Definition \ref{def:game}, we only provided the label $y $ of $x$ to the adversary and also give her the sample oracle from $D_n$. A stronger attacker can have access to the ``concept'' function $c(x)$ which is sampled from the distribution of $y$ given $x$ (according to $D_n$). This concept oracle might not be efficiently computable, even in scenarios that $D_n$ is efficiently samplable. In fact, even if $D_n$ is not efficiently samplable, just having access to a large enough pool of i.i.d. sampled data from $D_n$ is enough to run the experiment of Definition \ref{def:game}. In alternative winning conditions (e.g., the error-region definition) for Definition \ref{def:game} discussed above, it makes more sense to also include the ground truth concept oracle $c(\cdot)$ given as oracle to the adversary, as the adversary needs to achieve $h(x') \neq c(x')$. Another way to strengthen the power of adversary is to give him non-black-box access to the components of the game (see~\cite{papernot2017practical}). In definition \ref{def:game}, by default, we model adversaries who have black-box access to $h(\cdot),D_n$, but one can define non-black-box (white-box) access  to each of $h(\cdot),D_n$, if they are polynomial size objects. 



\citet{diochnos2018adversarial} focused on bounded perturbation adversaries that are unbounded  in their running time and formalized notions of of adversarial risk for a given hypothesis $h$ with respect to the   $b$-perturbing adversaries.  Using Definition~\ref{def:game}, in Definition \ref{def:advRiskComp},  we retrieve the notions of standard risk, adversarial risk, and its (new) computational variant.

\begin{definition}[Adversarial risk of hypotheses and learners] \label{def:advRiskComp}
Suppose $L$ is a learner for a problem $\problem=(\X,\Y,D,\H)$. For a class of attackers $\cA$ we define 
\begin{equation*}\AdvRisk_{\problem,\cA}(L)=\sup_{\adv \in \cA} \Pr[\adv \text{ wins}]\end{equation*}
where the winning is   in the experiment of Definition~\ref{def:game}.
When the attacker $\adv$ is fixed, we simply write $\AdvRisk_{\problem,\adv}(L)=\AdvRisk_{\problem,\set{\adv}}(L)$.
For a trivial attacker $I$ who outputs $x'=x$, it holds that $\Risk_\problem(L) = \AdvRisk_{\problem,I}(L)$.
When $\cA$ includes attacker that are \emph{only} bounded by $b$ perturbations, we use notation $\AdvRisk_{\problem,b}(L)=\AdvRisk_{\problem,\cA}(L)$, and when the adversary is further restricted to all $s$-size (oracle-aided) circuits, we use notation $\AdvRisk_{\problem,b,s}(L)=\AdvRisk_{\problem, \cA}(L)$.
When $L$ is a learner that outputs a fixed hypothesis $h$,  by substituting $h$ with $L$, we obtain the following similar  notions for $h$, which will be denoted as   $\Risk_{\problem}(h)$,   $\AdvRisk_{\problem,\cA}(h)$,   $\AdvRisk_{\problem, b}(h)$, and   $\AdvRisk_{\problem, b,s}(h)$.

\end{definition}  

\begin{definition}[Computationally robust learners and hypotheses] \label{def:compRob}
Let $\problem_n=(\X_n,\Y_n,D_n,\cH_n)$ be a family of classification  parameterized by $n$. We say that a learning algorithm $L$ is a \emph{computationally robust} learner   with risk  at most $R=R(n)$ against $b=b(n)$-perturbing adversaries,  if  for any polynomial $s=s(n)$, there is a negligible function $\negl(n)=n^{-\omega(1)}$ such that
\begin{equation*}\AdvRisk_{\problem_n, b,s}(L) \leq R(n) 
 \negl(n)
.\end{equation*}

Note that the size of circuit used by the adversary controls its computational power and that is why we are enforcing it to be a polynomial. Again, when $L$ is a learner that outputs a fixed hypothesis $h_n$ for each $n$, we say that the family $h_n$ is a computationally robust hypothesis with risk at most $R=R(n)$ against $b=b(n)$-perturbing adversaries, if $L$ is so. In both cases, we might simply  say that $L$  (or $h$) has \emph{computational risk} at most $R(n)$.
\end{definition} 

\begin{remark}
We remark that, in the definition above, one can opt to work with concrete bounds and  a version that drops the negligible probability $\negl$ on the right hand side of the equation and ask for the upper bound to be simply stated as $\AdvRisk_{\problem_n, b,s}(L) \leq R(n) $. Doing so, however, is a matter of style. In this work, we opt to work with the above definition, as the negligible probability usually comes up in computational reductions, and hence it simplifies the statement of our theorems, but both forms of the definition of computational risk are equally valid.
\end{remark}

\paragraph{PAC learning under computationally bounded tampering adversaries.} Recently, several works studied generalization under adversarial perturbations from a theoretical perspective \citep{bubeck2018adversarial,cullina2018pac,feige2018robust,attias2018improved,khim2018adversarial,yin2018rademacher,montasser2019vc,diochnos2019lower}, and hence they implicitly or explicitly revisited the ``probably approximately corect'' (PAC) learning framework of \cite{Valiant::PAC_Book} under adversarial perturbations.
Here we comment that, one can derive variants of those definitions for \emph{computationally bounded} attackers, by limiting their adversaries as done in  our Definition \ref{def:compRob}. In particular, we call a learner $L$ an $(\eps,\delta)$ PAC learner for a problem $\problem$ and computationally bounded $b$-perturbing adversaries, if with probability $1-\delta$, $L$ outputs a hypothesis $h$ that has computational risk at most $\eps$.


Bellow we formally define the notion of \hardrobust learning problems which captures the inherent vulnerability of a learning problem to adversarial attacks due to computational limitations of the learning algorithm. This definition are implicit in works of \citep{degwekar2019computational,bubeck2018adversarial1}. In Section \ref{sec:SepWith}, we use these \hardrobust problems to construct learning problems that are inherently non-robust in presence of computationally unbounded adversaries but have robust classifiers against  computationally bounded adversaries. 

\begin{definition}[\Hardrobust learning problems] \label{def:hardrobust}
A learning problem $\problem_n=(\X_n,\Y_n,D_n,\H_n)$ is \hardrobust w.r.t budget $b(n)$ if for any learning algorithm $L$ that runs in $\poly(n)$ we have
$$\AdvRisk_{\problem_n, b(L)} \geq 1 -\negl(n).$$
\end{definition}

\begin{theorem}[\cite{degwekar2019computational, bubeck2018adversarial1}]\label{thm:hardrobust}
There exist a Learning problem $\problem_n=(\X_n,\Y_n,D_n,\H_n)$ and a sub-linear budget $b(n)$ such that $\problem_n$ is \hardrobust w.r.t $b$
unless one-way functions do not exist.  (See appendix for the definition of one-way functions)
\end{theorem}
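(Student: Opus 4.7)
The plan is to construct a learning problem that is standardly easy (a trivial classifier achieves near-zero risk) but for which any polynomial-time learner is forced to output a hypothesis with essentially trivial adversarial accuracy under unbounded perturbations, by reducing robust learning to breaking a cryptographic primitive built from OWFs. The construction follows \cite{bubeck2018adversarial1} and \cite{degwekar2019computational}; I will sketch the structure rather than pin down the precise encoding.

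First, I would recall that OWFs imply the existence of a pseudorandom function family $\{f_s \colon \{0,1\}^m \to \{0,1\}\}_{s \in \{0,1\}^n}$ via the Goldreich--Goldwasser--Micali construction, with $m = \poly(n)$. I would then design $D_n$ over $\X_n = \{0,1\}^N$ (for some $N = \poly(n)$) whose samples decompose into a ``visible'' coordinate block whose label is fixed by a simple, publicly computable function (say, reading a dedicated label bit), together with an ``address'' block whose value in distribution samples falls in a sparse set $T \subset \{0,1\}^m$ but whose perturbations within Hamming budget $b(n)$ sweep through a dense subset of $\{0,1\}^m$ disjoint from $T$. The label is defined globally to equal the easy public function XOR'd with $f_s$ evaluated at the address block; on the support of $D_n$ the $f_s$-term is a fixed constant (absorbed by the sampling), making standard labels trivial to predict, while on perturbations it becomes a fresh PRF value.

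Next I would argue hard-robustness by contradiction. Suppose some polynomial-time learner $L$ outputs a hypothesis $h$ with $\AdvRisk_{\problem_n,b}(h) \leq 1 - 1/\poly(n)$. Using $h$, build a PRF distinguisher $\cD^{\cO}$ that simulates the training set by answering $D_n$-samples using only $\cO$ queries on addresses in $T$ (consistent with either $\cO = f_s$ or $\cO$ truly random), runs $L$ to obtain $h$, then picks a fresh challenge $(x,y) \gets D_n$ and a perturbation $x'$ within the ball that lands on a target address outside $T$. The robust accuracy of $h$ at $(x', \text{label})$ translates directly into nontrivial prediction of $\cO$ at a fresh address, yielding a distinguishing advantage of at least $1/\poly(n) - \negl(n)$ and contradicting PRF security. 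Since OWFs imply PRFs, the existence of $L$ contradicts the OWF assumption, so no such $L$ exists and $\AdvRisk_{\problem_n, b}(L) \geq 1 - \negl(n)$.

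The hardest step will be engineering the encoding in the first paragraph so that two competing requirements hold simultaneously: (a) training samples are simulable by the distinguisher using only PRF queries on the ``in-distribution'' address set $T$, so that PRF security kicks in on the unqueried region; and (b) Hamming balls of sub-linear radius $b(n)$ around a random sample reliably reach addresses outside $T$, so the adversary's perturbation corresponds to a fresh PRF query whose label the learner provably cannot anticipate. A natural way to achieve (b) is to use an error-correcting code on the address block, placing the in-distribution addresses on codewords while letting perturbations move into non-codewords; this ensures the sub-linear budget suffices to leave $T$ while preserving enough structure for the reduction to parse perturbed points cleanly.
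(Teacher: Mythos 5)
This theorem is cited in the paper without proof (it is imported from the referenced works), so the relevant question is whether your sketch is a correct rendition of the cited argument. As written, it has a real gap.

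The central problem is that your construction conflates two different winning conditions for the adversary, and under the paper's actual definition (Definition~\ref{def:game}, the corrupted-input condition), your learning problem is \emph{not} hard-robust. Concretely: you set the global label to be $g(x)\oplus f_s(a(x))$, where $g$ is an ``easy public function'' read directly off the visible block, and you arrange $D_n$ so that $f_s(a(x))$ equals a fixed constant $c$ whenever $a(x)\in T$. The adversary then perturbs only the address block to some $a'\notin T$. But in Definition~\ref{def:game} the adversary must achieve $h(x')\neq y$ where $y$ is the label that was sampled alongside the \emph{original} $x$, i.e.\ $y = g(x)\oplus c$. Since your perturbation leaves the visible block unchanged, the trivial learner ``compute $g$ on the visible block and XOR with $c$'' outputs $y$ on every $x'$ in the ball, so it is perfectly robust, and the problem is not hard-robust at all under this definition. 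Your phrase ``a fresh PRF query whose label the learner provably cannot anticipate'' only makes sense under the error-region winning condition ($h(x')\neq c(x')$, where $c$ is the global concept), which the paper explicitly distinguishes from the condition it actually uses; the two are equivalent only when the ground truth is itself perturbation-robust, and your concept $g(x)\oplus f_s(a(x))$ is deliberately not. Your PRF-distinguisher reduction inherits this confusion: the event $h(x')=y$ tells you about $f_s$ at the original address $a\in T$, not at the fresh address $a'$, so the reduction does not extract a prediction of $\cO$ at an unqueried point.

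To repair the sketch along the lines of the cited constructions, you need the budget-$b$ perturbation to be able to destroy \emph{every} computationally easy signal for $y$ in $x$ while leaving a cryptographically protected backup. For instance: let $y$ appear both as a directly readable bit $v$ and redundantly as $a = (r,\, f_s(r)\oplus y)$ or as a noisy codeword in a pseudorandom code; the adversary flips $v$; a robust classifier must then decode $a$, which requires $f_s$ (or code-membership testing), and you reduce to PRF/PRG security at a \emph{fresh} $r$. This is essentially what \citet{bubeck2018adversarial1} and \citet{degwekar2019computational} do. Once you set it up this way you also need to be careful about the quantitative claim $\AdvRisk \geq 1-\negl(n)$: a single distinguishing query as in your sketch only rules out robustness on a noticeable fraction of inputs, and pushing to $1-\negl$ requires either a label space larger than two values, an amplification step, or the specific codeword structure used in the cited papers, none of which your proposal addresses.
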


\paragraph{Discussion on falsifiability of  computational robustness.} If the learner $L$ is polynomial time, and that the distribution $D_n$ is samplable in polynomial time (e.g., by sampling $y$ first and then using a generative model to generate $x$ for $y$), then the  the computational robustness of learners as defined based on Definitions \ref{def:compRob} and \ref{def:game} is a ``falsifiable'' notion of security as defined by  \citet{naor2003cryptographic}. Namely, if an adversary claims that it can break the computational robustness of the learner $L$, it can prove so in polynomial time by participating in a challenge-response game and winning in this game with a noticeable (non-negligible) probability more than $R(n)$. This feature is due to the crucial property of the challenger in Definition \ref{def:game} that is a polynomial time algorithm itself, and thus can be run efficiently. Not all security games have efficient challengers (e.g., see \citet{pandey2008adaptive}).

 \section{From Computational Hardness to Computational Robustness} \label{sec:SepWith}

In this section, we will first prove our main result that shows the existence of a learning problem with classifiers that are only computationally robust. We first prove our result by starting from \emph{any} hypothesis that is vulnerable to adversarial examples; e.g., this could be any of the numerous algorithms shown to be susceptible to adversarial perturbations. Our constructions use error correction codes and cryptographic signatures. For definitions of these notions refer to section \ref{sec:tools}.

\subsection{Computational Robustness with Tamper Detection}
Our first construction uses hypothesis with tamper detection (i.e, output $\star$ capability). Our construction is based on cryptographic signature schemes with short (polylogarithmic) signatures.


\begin{construction}[Computationally robust problems relaying on tamper detection wrappers]\label{const:Learningfamily}
Let $\mathsf{Q}\xspace=(\set{0,1}^d, \cY, D , \cH)$ be a learning problem and $h\in \cH$ a classifier for $\sfQ$ such that $\Risk_{\sfQ}(h) = \alpha$.
We construct a family of learning problems $\problem_n$ (based on the fixed problem $\sfQ$) with a family of classifiers $h_n$. In our construction we use signature scheme $(\KGen,\Sign,\verify)$ for which the bit-length of $\vk$ is $\lambda$ and the bit-length of signature is $\ell(\lambda)=\polylog(\lambda)$
\footnote{Such signatures exist assuming exponentially hard one-way functions \citep{rompel1990one}.}.
We also use an error correction code $(\encode, \decode)$ with code rate $\mathsf{cr}=\Omega(1)$ and error rate $\mathsf{er}=\Omega(1)$.   
\begin{enumerate}
    \item The space of instances for $\problem_n$ is $\cX_n = \set{0,1}^{n + d+\ell(n)}$.
    \item The set of labels is $\cY_n = \cY$. 
    \item The distribution $D_n$ is defined by the following process: first sample $(x,y)\gets D$, $(\sk, \vk)\gets \KGen(1^{n\cdot \mathsf{cr}})$, $\sigma\gets \Sign(\sk,x)$, then encode $[\vk] =\encode(\vk)$ and output~$\left((x,\sigma,[\vk]),y\right)$.
    \item The classifier $h_n \colon \cX_n \to \cY_n$ is defined as
    \begin{align*}
        h_n(x,\sigma,[\vk]) = \begin{cases} h(x) & \text{if~~} \verify\left(\decode([\vk]), x, \sigma \right),\\
        \star & \textit{otherwise.}
        \end{cases}
    \end{align*}
\end{enumerate}
\end{construction}

\begin{theorem}\label{thm:comprobhyp}
For family $\problem_n$ of Construction \ref{const:Learningfamily}, the family of classifiers $h_n$ is computationally robust with risk at most $\alpha$ against adversaries with budget $\mathsf{er}\cdot n$. (Recall that $\mathsf{er}$ is the error rate of the error correction code.) On the other hand $h_n$ is not robust against information theoretic adversaries of budget $b+\ell(n)$, if $h$ itself is not robust to $b$ perturbations: \begin{equation*}
    \AdvRisk_{\problem_n, b+\ell(n)}(h_n) \geq \AdvRisk_{\mathsf{Q},b}(h).
\end{equation*}
\end{theorem}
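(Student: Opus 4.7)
The theorem has two halves---an upper bound on computational risk and a lower bound for unbounded attackers---which I treat separately. In both halves, the error-correcting code around $\vk$ is what keeps the perturbation budget from corrupting the verification key, while the signature is what distinguishes attackers that can manipulate $x$ from those that cannot.

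\textbf{Upper bound (computational robustness).} Fix any polynomial-size adversary $\cB$ against $h_n$ with budget $\mathsf{er}\cdot n$. Given a challenge $((x,\sigma,[\vk]),y)$, any output $(x',\sigma',E')$ within Hamming distance $\mathsf{er}\cdot n$ of the challenge flips at most $\mathsf{er}\cdot n$ bits inside the $n$-bit block carrying $[\vk]$, so by the code's error-correction guarantee $\decode(E')=\vk$ unconditionally. Thus, whenever $h_n(x',\sigma',E')\neq\star$, the pair $(x',\sigma')$ is a valid message-signature pair under the original key $\vk$. The winning event of Definition~\ref{def:game} therefore splits into: (i) $x'=x$, which forces $h(x)\neq y$ and contributes at most the standard risk $\alpha$; and (ii) $x'\neq x$, in which $(x',\sigma')$ is a signature under $\vk$ on a message never signed by $\sk$. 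Case~(ii) I reduce to existential unforgeability of $\Sign$: the forger $\cF$ receives $\vk$ and a signing oracle, samples $(x,y)\gets D$, queries the oracle once for $\sigma$, runs $\cB$ on $((x,\sigma,\encode(\vk)),y)$, and simulates $\cB$'s oracle queries locally---$h_n$-queries by evaluating $h_n$ directly (both $h$ and $\verify$ are public), and $D_n$-sampler queries by sampling $D$ and attaching freshly generated key pairs (requiring no additional signing-oracle calls). Any $\cB$-output with $x'\neq x$ and $h_n(\cdot)\neq\star$ yields $\cF$ a forgery, so unforgeability bounds case~(ii) by $\negl(n)$ and the total computational risk by $\alpha+\negl(n)$.

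\textbf{Lower bound (information-theoretic attack).} Given any $b$-perturbing attacker $\cA$ against $(\sfQ,h)$, I construct an unbounded $\cA'$ against $h_n$ with budget $b+\ell(n)$. On input $((x,\sigma,[\vk]),y)$, $\cA'$ runs $\cA$ on $(x,y)$ to obtain $x'$; $\cA'$ simulates the oracles $\cA$ needs by projecting $D_n$-samples down to $D$, and by answering any $h$-query at a point $z$ via producing a fresh key pair, signing $z$, and querying $h_n$. If $x'=x$, $\cA'$ outputs the challenge unchanged, winning in exactly the regime in which $\cA$ does since $h_n(x,\sigma,[\vk])=h(x)$. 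If $x'\neq x$, $\cA'$ exhaustively searches $\{0,1\}^{\ell(n)}$ for some $\sigma'$ with $\verify(\vk,x',\sigma')=1$; such a $\sigma'$ exists by completeness of $\Sign$ (e.g.\ by brute-force recovering some signing key consistent with $\vk$ and signing $x'$), and $\cA'$ outputs $(x',\sigma',[\vk])$. The total perturbation is at most $\HD(x,x')+\ell(n)\le b+\ell(n)$ bits, $h_n$ verifies successfully and returns $h(x')\neq y$ without outputting $\star$, so $\cA'$ wins whenever $\cA$ does. Taking the supremum over $\cA$ yields the desired inequality.

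\textbf{Main obstacle.} The principal care-point is the budget-accounting in the upper bound: however $\cB$ allocates its $\mathsf{er}\cdot n$ bit-flips among $x$, $\sigma$, and $[\vk]$, the portion spent inside the $[\vk]$-block cannot exceed the code's correction threshold, which is precisely what ensures $\vk$ survives unchanged and reduces the remaining case cleanly to a standard EUF-CMA reduction in which $\cF$ signs only the single challenge message $x$.
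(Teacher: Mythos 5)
Your proposal is correct and follows essentially the same route as the paper: the error-correcting code pins $\vk$ under the $\mathsf{er}\cdot n$ budget, the $x'\neq x$ case reduces to one-time unforgeability of the signature, and the lower bound uses an unbounded attacker that perturbs $x$ and brute-forces a consistent signature of length $\ell(n)$. Your write-up is actually slightly more explicit than the paper's on a couple of points the paper glosses over---namely, how the forger simulates $\cB$'s $D_n$-oracle with self-generated keys (the paper adds a caveat that $D$ must be efficiently samplable, which you implicitly assume), and how $\cA'$ simulates $\cA$'s $h$-oracle by signing locally and calling $h_n$---but the underlying argument is identical.
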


Theorem \ref{thm:comprobhyp} means that, the computational robustness of $h_n$ could be as large as $\Omega(n)$ (by choosing a code with constant error correction rate) while its information theoretic adversarial robustness could be as small as $b+\polylog(n) \leq \polylog(n)$ (note that $b$ is a constant here) by choosing a signature scheme with short signatures of poly-logarithmic length.
\\
Before proving Theorem \ref{thm:comprobhyp} we state the following corollary about \hardrobust learning problems.
\begin{corollary}\label{cor:comprob_any}
If the underlying problem $Q$ in Construction \ref{const:Learningfamily} is \hardrobust w.r.t sublinear budget $b(n)$, then for \emph{any} polynomial learning algorithm $L$ for $\problem_n$  we have
$$\AdvRisk_{\problem_n, b+\ell(n)}(L) \geq 1- \negl(n).$$ On the other hand, the family of classifiers $h_n$ for $\problem_n$ is computationally robust with risk at most $\alpha$ against adversaries with linear budget.
\end{corollary}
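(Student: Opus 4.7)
The second claim is immediate from Theorem \ref{thm:comprobhyp}, since $\mathsf{er}\cdot n$ is linear in $n$ and $h_n$ is precisely the classifier built there from any $h$ for $Q$ with risk $\alpha$. So the substantive claim is the first one, and my plan is a black-box reduction: from any polynomial learner $L$ for $\problem_n$, build a polynomial learner $L'$ for $Q$ so that an unbounded $b$-budget adversary against $L'$ translates into an unbounded $(b+\ell(n))$-budget adversary against $L$; combining this with the assumption that $Q$ is \hardrobust then forces $\AdvRisk_{\problem_n,b+\ell(n)}(L)\geq 1-\negl(n)$.

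First I would define $L'$: given samples $S_Q=\set{(x_i,y_i)}_{i=1}^m$ from $D^m$, freshly draw $(\sk_i,\vk_i)\gets\KGen(1^{n\cdot\mathsf{cr}})$ for each $i$, compute $\sigma_i=\Sign(\sk_i,x_i)$ and $[\vk_i]=\encode(\vk_i)$, pass the wrapped sample set $S_n=\set{((x_i,\sigma_i,[\vk_i]),y_i)}$ to $L$ to obtain $h^*_n=L(S_n)$, then draw one further pair $(\sk_0,\vk_0)\gets\KGen$ and output the deterministic hypothesis $h_Q(x)=h^*_n(x,\Sign(\sk_0,x),\encode(\vk_0))$. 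Since $L$, $\KGen$, $\Sign$, and $\encode$ all run in polynomial time, so does $L'$, and by the assumption that $Q$ is \hardrobust (Definition \ref{def:hardrobust}) there is an unbounded $b$-budget adversary $A_Q$ with $\Pr[A_Q\text{ wins against }L']\geq 1-\negl(n)$.

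Next I would use $A_Q$ to build the unbounded $(b+\ell(n))$-budget adversary $A_n$ against $L$. On challenge $((x,\sigma,[\vk]),y)\gets D_n$ with oracle access to $h^*_n$ and $D_n$, $A_n$ simulates $A_Q$ on input $(x,y)$, answering each oracle query $h_Q(x^\ast)$ by brute-forcing a canonical valid signature $\sigma^\ast$ for $x^\ast$ under $\vk$ and returning $h^*_n(x^\ast,\sigma^\ast,[\vk])$, and each $D$-sample query by projecting a $D_n$-sample. When $A_Q$ outputs $x'$, $A_n$ likewise forges a valid $\sigma'$ for $x'$ under $\vk$ and outputs $(x',\sigma',[\vk])$. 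Since the encoded verification key is left untouched, the Hamming perturbation is $\HD(x,x')+\HD(\sigma,\sigma')\leq b+\ell(n)$, and $h^*_n(x',\sigma',[\vk])$ coincides with the value $h_Q(x')$ seen by $A_Q$, so $A_n$ wins whenever $A_Q$ does. The key observation is that the joint distribution of $(h^*_n,\vk,(x,y))$ in $A_n$'s game matches that of $(h^*_n,\vk_0,(x_0,y_0))$ in $A_Q$'s game against $L'$---both $\vk$ and $\vk_0$ are fresh $\KGen$ outputs independent of $S_n$, $L$'s randomness, and the challenge point---so the simulated view of $A_Q$ is distributed identically to its real view; hence $\Pr[A_n\text{ wins}]\geq 1-\negl(n)$.

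The main obstacle will be ensuring perfect consistency of the $h_Q$-simulation when valid signatures for a given message are not unique: brute-force may return a different signature than $\Sign(\sk_0,\cdot)$ would, on which $h^*_n$ could a priori behave differently. I would sidestep this either by working with a signature scheme with unique signatures (a feature compatible with the polylogarithmic-signature scheme invoked in Theorem \ref{thm:comprobhyp}) or, alternatively, by redefining $h_Q$ inside $L'$ to itself use the same canonical brute-force rule, so that $h_Q$ depends only on $\vk_0$ and $A_n$'s simulation becomes literally identical to the real $h_Q$.
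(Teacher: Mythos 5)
Your approach is a valid reduction and is almost certainly what the authors have in mind; the paper itself gives no proof, dismissing the corollary with a one-line remark that it ``follows from Theorem~\ref{thm:comprobhyp} and definition of \hardrobust learning problems.'' Your write-up supplies the missing content: from an arbitrary poly-time learner $L$ for $\problem_n$ you build a poly-time learner $L'$ for $Q$ by wrapping the training set exactly as $D_n$ does, running $L$, and post-composing the output with a fixed fresh signing key; hardness of $Q$ then yields an unbounded adversary $A_Q$, which you lift to an unbounded $(b+\ell(n))$-adversary $A_n$ against $L$. The distributional matching of $(h^*_n,\vk,(x,y))$ against $(h^*_n,\vk_0,(x_0,y_0))$ is the right observation and is the crux of the argument.

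The one point you flag---that a brute-forced valid signature for $x^\ast$ under $\vk$ need not coincide with $\Sign(\sk,x^\ast)$, so $A_n$'s simulated $h_Q$-oracle might disagree with the real one---is a genuine subtlety that the paper's terse remark does not address, and you are right to raise it. Of your two proposed fixes, the first (work with a unique-signature scheme) is the clean one, and it is compatible with Construction~\ref{const:Learningfamily} as long as one instantiates the Lamport-style short one-time signature over a one-way \emph{permutation} (or otherwise enforces that $\verify(\vk,x,\cdot)$ has exactly one accepting $\sigma$), which preserves both the polylogarithmic signature length and the unforgeability needed in Theorem~\ref{thm:comprobhyp}. The second fix---redefining $h_Q$ inside $L'$ to use the same canonical brute-force rule---needs more care than you give it: the resulting $h_Q$ is no longer efficiently evaluable from its description, and while Definition~\ref{def:hardrobust} only constrains $L$ to be $\poly(n)$, the hard-robust instances of \citet{bubeck2018adversarial1,degwekar2019computational} are stated for learners that output hypotheses in a prescribed (efficiently evaluable) class; pushing an exponential-time $h_Q$ through that definition would need an explicit argument. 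So I would present fix~1 as the actual proof and drop fix~2, or at least note its caveat. With that adjustment your proof is correct and complete, and it is a useful elaboration of a step the paper glosses over.
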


 The above corollary follows from Theorem \ref{thm:comprobhyp} and definition of \hardrobust learning problems. The significance of this corollary is that it provides an example of a learning problem that could not be learnt robustly with \emph{any} polynomial time algorithm. However, the same problem has a classifier that is robust against computationally bounded adversaries. This construction uses a \hardrobust learning problem that is proven to exist based on cryptographic assumptions \citep{bubeck2018adversarial1,degwekar2019computational}. Now we prove Theorem \ref{thm:comprobhyp}.

\smallskip
\begin{proof} (of Theorem \ref{thm:comprobhyp})
We first prove the following claim about the risk of $h_n$.
\begin{claim}\label{clm:01}
For problem $\problem_n$ we have
\begin{equation*}\Risk_{\problem_n}(h_n) = \Risk_{\sfQ}(h)= \alpha.\end{equation*}
\end{claim}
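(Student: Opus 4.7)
The plan is to show that on a freshly sampled example from $D_n$, the verification step inside $h_n$ always succeeds, so $h_n$ simply forwards $h$'s prediction; the risk then carries over verbatim.

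First I would unpack the sampling process that defines $D_n$: sample $(x,y)\gets D$, generate $(\sk,\vk)\gets \KGen(1^{n\cdot \mathsf{cr}})$, compute $\sigma\gets \Sign(\sk,x)$, and set $[\vk] = \encode(\vk)$. By the (perfect) correctness of the error-correcting code, $\decode(\encode(\vk)) = \vk$, so the key that $h_n$ recovers from $[\vk]$ is exactly the one that was used to sign. By the (perfect) correctness of the signature scheme, $\verify(\vk, x, \sigma) = 1$ whenever $\sigma \gets \Sign(\sk, x)$. Hence on every example $((x,\sigma,[\vk]),y)$ in the support of $D_n$, the verification branch of $h_n$ is taken and $h_n(x,\sigma,[\vk]) = h(x)$.

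From here the argument is a one-line coupling. Writing $\Risk_{\problem_n}(h_n) = \Pr_{((x,\sigma,[\vk]),y)\gets D_n}[h_n(x,\sigma,[\vk])\neq y]$ and substituting $h_n(x,\sigma,[\vk]) = h(x)$, the $(\sk,\vk,\sigma)$ components drop out of the event, and what remains is $\Pr_{(x,y)\gets D}[h(x)\neq y] = \Risk_{\sfQ}(h) = \alpha$.

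I do not expect any real obstacle here: the entire claim hinges on the two correctness properties (signature correctness and code correctness) lining up so that the verification guard is trivially satisfied on honestly sampled inputs. The only mild point worth flagging in the write-up is that if one works with a signature scheme or code with negligible correctness error, the equality becomes $\Risk_{\problem_n}(h_n) = \alpha \pm \negl(n)$; since the constructions cited (standard signatures from OWFs and standard constant-rate error-correcting codes) can be taken to have perfect correctness, equality holds on the nose.
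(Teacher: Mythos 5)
Your proof is correct and follows the same approach as the paper: appeal to the correctness of the error-correcting code and the completeness of the signature scheme to conclude that the verification guard inside $h_n$ always passes on honestly sampled inputs, so $h_n$ reduces to $h$ and the risk carries over exactly. In fact your write-up is slightly more explicit than the paper's (the paper cites only signature completeness and leaves code correctness implicit), and your remark about possible negligible correctness error is a reasonable point of care, though not needed here.
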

\begin{proof}
The proof follows from the completeness of the  signature scheme. We have,
\begin{align*}
    \Risk_{\problem_n}(h_n)&= \Pr[\left(\left(x, \sigma, [\vk]\right),y\right)\gets D_n;~ h_n(x,\sigma, [\vk]) \neq y ]\\
    &=\Pr[(x,y) \gets D;~ h(x) \neq y] = \Risk_{\mathsf{Q}}(h).
\end{align*}
\end{proof}
Now we prove the computational robustness of $h_n$.
\begin{claim}\label{clm:02}
For family $\problem_n$, and for any polynomial $s(\cdot)$ there is a negligible function $\negl$ such that for all $n \in \N$ 
\begin{equation*}\AdvRisk_{\problem_n, \mathsf{er}\cdot n,s}(h_n) \leq \alpha + \negl(n).\end{equation*}
\end{claim}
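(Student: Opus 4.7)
The plan is to fix any adversary $\adv$ against $h_n$ whose circuit (including its oracle gates) has size at most $s(n)$, and to split its winning event according to the first coordinate of the triple it outputs. Concretely, write the challenge as $\bar{x}=(x,\sigma,[\vk])$ with label $y$, sampled as in Construction~\ref{const:Learningfamily}, and write the adversary's output as $\bar{x}'=(x',\sigma',C')$. We will bound $\Pr[\adv\text{ wins}]$ by $\alpha + \negl(n)$ by analyzing two disjoint sub-events.

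First I handle the sub-event $\cE_1$ that $x'=x$ (so only the signature or encoded key is altered, or nothing is altered). Because $\HD(\bar{x}',\bar{x})\leq \mathsf{er}\cdot n$ and $[\vk]=\encode(\vk)$ is a codeword, the decoding guarantee of $(\encode,\decode)$ (up to error rate $\mathsf{er}$) forces $\decode(C')=\vk$. Hence $h_n(\bar{x}')$ is either $\star$ or equals $h(x)$, so $\adv$ wins only if $h(x)\neq y$, which has probability exactly $\Risk_{\sfQ}(h)=\alpha$ by Claim~\ref{clm:01} applied to the marginal of $(x,y)$.

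Next I handle the sub-event $\cE_2$ that $x'\neq x$. Here the same error-correction argument still gives $\decode(C')=\vk$, so the winning condition ``$h_n(\bar{x}')\notin\{y,\star\}$'' forces $\verify(\vk,x',\sigma')=1$ with $x'\neq x$, i.e.\ $(x',\sigma')$ is an existential forgery on a fresh message under $\vk$. The key step is a reduction to EUF-CMA security of $(\KGen,\Sign,\verify)$: build a polynomial-size forger $\cB$ that on input a challenge key $\vk$ internally samples $(x,y)\gets D$, makes a single signing-oracle query on $x$ to obtain $\sigma$, forms $[\vk]=\encode(\vk)$, and passes $((x,\sigma,[\vk]),y)$ to $\adv$. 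Crucially, $\cB$ can perfectly simulate both oracles $\adv$ expects: the $D_n$-sampler by drawing fresh key pairs $(\sk_i,\vk_i)\gets\KGen(1^{n\cdot\mathsf{cr}})$ itself (no interaction with the external challenger needed), and the $h_n$-oracle because evaluating $h_n$ on any query $(z,\tau,C)$ only requires $\decode(C)$ and the public $\verify$ algorithm, together with the publicly specified $h$. When $\adv$ succeeds under $\cE_2$, $\cB$ outputs $(x',\sigma')$ and wins the EUF-CMA game. Since $\cB$ has size polynomial in $s(n)$, signature security yields $\Pr[\cE_2\wedge \adv\text{ wins}]\leq \negl(n)$.

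Combining the two sub-cases, $\AdvRisk_{\problem_n,\mathsf{er}\cdot n,s}(h_n)\leq \alpha + \negl(n)$ for every polynomial $s$, as required. The only subtlety I foresee is in the $\cE_1$ case, where one must verify that the adversary cannot use its oracle queries to amplify the $\alpha$ bound (it cannot, because conditioned on the challenge $(x,y)$ the oracles are independent of the event $h(x)\neq y$), and in the reduction one must be careful that $\cB$ uses only one signing query on the challenge key, so that a forgery on $x'\neq x$ genuinely breaks EUF-CMA.
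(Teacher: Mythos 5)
Your proof is correct and takes essentially the same approach as the paper: you use the error-correction guarantee to pin down $\decode(C')=\vk$, split the winning event by whether $x'=x$ (bounded by the base risk $\alpha$) or $x'\neq x$ (reduced to one-time/EUF-CMA unforgeability of the signature scheme), exactly as in the paper's proof. The only point the paper is slightly more careful about is that simulating the oracles $\adv$ expects (the $D$-sampler and $h$) requires these to be polynomial-size objects—or, failing that, one must assume the signature is secure relative to those oracles—whereas you assert the simulation is trivially efficient; this is a minor caveat rather than a flaw.
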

\begin{proof}
Let $\adv_{\set{{n\in \N}}}$ be the family of circuits maximizing the adversarial risk for $h_n$ for all $n \in \N$. We build a sequence of circuits $\adv^1_{\set{{n\in \N}}}$, $\adv^2_{\set{{n\in \N}}}$ such that $\adv^1_n$ and $\adv^2_n$ are of size at most $s(n) + poly(n)$. $\adv^1_n$ just samples a random $(x,y)\gets D$ and outputs $(x,y)$. $\adv^2_n$ gets $x, \sigma$ and  $\vk$, calls $\adv_n$ to get $(x', \sigma', \vk')\gets \adv_n((x,\sigma, [\vk]), y)$ and  outputs $(x',\sigma')$. Note that $\adv^2_n$ can provide all the oracles needed to run $\adv_n$ if the sampler from $D$, $h$ and $c$ are all computable by a circuit of polynomial size. Otherwise, we need to assume that our signature scheme is secure with respect to those oracles and the proof will follow. We have,
\begin{align*}
\AdvRisk_{\problem_n, \mathsf{er}\cdot n, s}(h_n) &= \Pr[((x,\sigma, [\vk]),y)\gets D_n;~ (x',\sigma', \vk')\gets A((x,\sigma, [\vk]),y));~\\
&~~~~~~~~(x',\sigma', \vk') \in \HD_{\mathsf{er}\cdot n}(x,\sigma, [\vk]) \wedge h_n(x',\sigma', \vk')\neq \star \wedge h_n(x',\sigma', \vk') \neq y].
\end{align*}
Note that $(x',\sigma', \vk') \in \HD_{\mathsf{er}\cdot n}(x,\sigma, [\vk])$ implies that $\decode(\vk') = \vk$ based on the error rate of the error correcting code. Also $h_n(x',\sigma', \vk')\neq \star$ implies that $\sigma'$ is a valid signature for $x'$ under verification key $\vk$. Therefore, we have,
\begin{align*}
    &\AdvRisk_{\mathsf{er}\cdot n,s}(h_n) \\
    &\leq \Pr[(\sk,\vk)\gets \KGen(1^n);~ (x,y)\gets \adv_1(1^n);~ \sigma\gets \Sign(\sk,x);~ (x',\sigma') \gets \adv_2(x,\sigma, \vk);~
    \\&~~~~~~~~~~~\verify(\vk,x',\sigma')\wedge h_n(x',\sigma',[\vk]) \neq y]\\
    &\leq \Pr[(\sk,\vk)\gets \KGen(1^n);~ x\gets \adv_1(1^n);~ \sigma\gets \Sign(\sk,x);~ (x',\sigma') \gets \adv_2(x,\sigma, \vk);~
    \\&~~~~~~~~~~~\verify(\vk,x',\sigma') \wedge x'\neq x] + \Risk_{\problem_n}(h_n).
\end{align*}
Thus, by the unforgeability of the one-time signature scheme we have
\begin{equation*}\AdvRisk_{\problem_n, \mathsf{er}\cdot n,s}(h_n)\leq \Risk_{\problem_n}(h_n) + \negl(n),\end{equation*}
which by Claim \ref{clm:01} implies
\begin{equation*}\AdvRisk_{\mathsf{er}\cdot n,s}(h_n)\leq \alpha + \negl(n).\end{equation*}
\end{proof}
Now we show that $h_n$ is not robust against computationally unbounded attacks.
\begin{claim}\label{clm:03}
For family $\problem_n$ and any $n, b \in \N$ we have
\begin{equation*}\AdvRisk_{\problem_n, b+\ell(n)}(h_n) \; \geq \;  \AdvRisk_{\mathsf{Q}, b(h)}.\end{equation*}
\end{claim}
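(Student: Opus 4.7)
The plan is to exhibit an information-theoretic adversary $\adv$ against $(\problem_n, h_n)$ whose winning probability is at least that of the best $b$-perturbing adversary against $(\sfQ, h)$. Let $\adv^\ast$ be any $b$-perturbing (unbounded) adversary against $(\sfQ, h)$. I would define $\adv$ on input $((x, \sigma, [\vk]), y)$ as follows: first run $x' \gets \adv^\ast(x, y)$, using oracle calls to $h$ and $D$ (both of which can be simulated given $h_n$ and $D_n$ since $h_n$ agrees with $h$ on well-formed inputs and $D_n$ carries $D$ as a marginal). Next, recover $\vk \gets \decode([\vk])$, and by exhaustive search over all strings of length $\ell(n)$, find some $\sigma'$ such that $\verify(\vk, x', \sigma') = 1$; such a $\sigma'$ exists because $\Sign(\sk, x')$ produces one under the $\sk$ matching $\vk$ (and $\adv$ is computationally unbounded, so brute force is allowed). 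Finally, output $(x', \sigma', [\vk])$.

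I would then verify the three winning conditions. For the perturbation budget: $[\vk]$ is unchanged, $x$ and $x'$ differ in at most $b$ coordinates by construction of $\adv^\ast$, and $\sigma, \sigma'$ each have length $\ell(n)$, so $\HD\bigl((x, \sigma, [\vk]), (x', \sigma', [\vk])\bigr) \leq b + \ell(n)$. For the classifier's output: because $\sigma'$ verifies under $\vk = \decode([\vk])$, we have $h_n(x', \sigma', [\vk]) = h(x') \neq \star$. For misclassification: whenever $\adv^\ast$ wins against $(\sfQ, h)$, either $x = x'$ with $h(x) \neq y$, or $x \neq x'$ with $h(x') \neq y$; in both cases $h_n(x', \sigma', [\vk]) = h(x') \neq y$, so $\adv$ wins against $(\problem_n, h_n)$.

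Finally, I would couple the two experiments using the same underlying sample. Sampling $((x, \sigma, [\vk]), y) \gets D_n$ induces the marginal $(x, y) \gets D$, so conditioned on any $(x, y)$, the winning event of $\adv$ in the $\problem_n$ game contains the winning event of $\adv^\ast$ in the $\sfQ$ game (the $x = x'$ case is handled symmetrically since $\adv$ can simply output $(x, \sigma, [\vk])$ when $\adv^\ast$ does not move). Taking expectations and then the supremum over all such $\adv^\ast$ yields
\begin{equation*}
\AdvRisk_{\problem_n, b + \ell(n)}(h_n) \;\geq\; \sup_{\adv^\ast} \Pr[\adv^\ast \text{ wins in the } \sfQ\text{ game}] \;=\; \AdvRisk_{\sfQ, b}(h).
\end{equation*}

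No step appears to be a serious obstacle; the only subtlety is ensuring that $\adv$'s brute-force signature forgery does not affect the correctness of the reduction (it does not, since $\adv$ is unbounded) and that the budget accounting accommodates the possibility that $\sigma' \neq \sigma$ on every bit, which is why the claim allows $b + \ell(n)$ rather than $b$ perturbations.
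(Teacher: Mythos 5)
Your proof is correct and follows essentially the same approach as the paper's: perturb $x$ to $x'$ using the base adversary's budget $b$, brute-force a fresh valid signature $\sigma'$ for $x'$ under the unchanged $\vk$, leave $[\vk]$ alone, and account for the at most $\ell(n)$ extra bit flips in the signature slot. The paper directly describes an optimal information-theoretic adversary (take the closest misclassified $x'$) rather than reducing from an arbitrary $\adv^\ast$ and taking a supremum, but these are two phrasings of the same argument and you land on the identical inequality.
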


\begin{proof}
For any $((x,\sigma, [\vk]), y)$ define $A(x,\sigma,[\vk]) = (x', \sigma', [\vk])$ where $x'$ is the closes point to $x$ where $h(x)\neq y$ and   $\sigma'$ is a valid signature such that $\verify(\vk,x^*, \sigma')=1$. Based on the fact that the size of signature is $\ell(n)$, we have $\HD(A(x, \sigma, [\vk]) , (x,\sigma,[\vk])) \leq \ell(n) + \HD(x,x').$ Also, it is clear that $h_n(A(x, \sigma, [\vk]))\neq \star$ because $\sigma'$ is a valid signature. Also, $h_n(A(x, \sigma, [\vk])) \neq c_n(A(x, \sigma, [\vk]))$. Therefore we have
\begin{align*}
    &\AdvRisk_{\problem_n, b+\ell(n)}(h_n)\\
    &= \Pr[((x,\sigma, [\vk]),y)\gets D_n; \exists (x',\sigma') \in \HD_{b+\ell(n)}(x,\sigma), h(x') \neq y \wedge h(x') \neq \star \wedge \verify(\vk,\sigma',x') ]\\
    &\geq \Pr[((x,\sigma, [\vk]),y)\gets D_n; \exists x' \in \HD_{b}(x), h(x') \neq y \wedge h(x') \neq \star ]\\
    & = \AdvRisk_{\mathsf{Q}, b}(h).
\end{align*}
\end{proof}
This concludes the proof of Theorem \ref{thm:comprobhyp}.
\end{proof}

\subsection{Computational Robustness without Tamper Detection}

The following theorem shows an alternative construction that is incomparable to Construction \ref{const:Learningfamily}, as it does not use any tamper detection. On the down side, the construction is not defined with respect to an arbitrary (vulnerable) classifier of a natural problem.

\remove{
\begin{construction}
Let $\mathsf{Q}\xspace=(\bits^d, \bits, D , \cH)$ be a learning problem such that $\Pr[(x,y)\gets D; y=1] = \alpha$.
\SomeshNote{Should $\mathsf{Q}$ be $\problem$}
We construct a family of learning problems $\problem_{\set{n\in \N}}$ with a family of classifiers $h_n$. In our construction we use a signature scheme $(\KGen,\Sign,\verify)$ for which the bit-length of $\vk$ is $\lambda$ and the bit-length of signature is $\ell(\lambda)=\polylog(\lambda)$ and an error correction code $(\encode, \decode)$ with code rate $\mathsf{cr}$ and error rate $\mathsf{er}$.
\begin{enumerate}
    \item The space of instances for $\problem_n$ is $\cX_n = \bits^{d + n + \ell(n)}$.\SomeshNote{Is the dimension right?}
    \item The set of labels is $\cY_n = \bits$. 
    \item The distribution $D_n$ is defined as follows: first sample $(x,y)\gets D$, then sample $(\sk, \vk)\gets \KGen(1^{n\cdot\mathsf{cr}})$ and compute $[\vk] = \encode(\vk)$, if $y=0$ sample a random $\sigma\gets \bits^{\ell(n)}$ and output $((x,\sigma,[vk]), 0)$. Otherwise compute $\sigma \gets \Sign(\sk, x)$ and output $((x,\sigma, [\vk]), 1)$.
    \SomeshNote{Why don't we use $(x,\sigma,[\vk],1)$ to be consistent with the previous definition?} \SomeshNote{good idea.}
    \item The classifier $h_n \colon \cX_n \to \cY_n$ is defined as
    \begin{align*}
        h_n(x,\sigma,\vk') = \begin{cases} 1 & \text{if~~} \verify\left(\decode(\vk'), x, \sigma \right),\\
        0 & \textit{otherwise.}
        \end{cases}
    \end{align*}
\end{enumerate}
\end{construction}

\begin{theorem}\label{thm:comprobhyp2}
For family $\problem_n$ of Construction \ref{const:Learningfamily}, the family of classifiers $h_n$ has risk $0$ and is computationally robust with risk at most $\alpha$ against adversaries of budget $\mathsf{er}\cdot n$. On the other hand $h_n$ is not robust against information theoretic adversaries of budget $\ell(n)$: \begin{equation*}
    \AdvRisk_{\ell(n)}(h_n) =1.
\end{equation*}
\end{theorem}
}
\begin{construction}[Computational robustness without tamper detection]\label{const:Learningfamily3}
Let $D$ be a distribution over $\bits^{\mathsf{cr}\cdot n} \times \bits$ with a balanced ``label'' bit: $\Pr_{(x,y)\gets D}[y=0] = 1/2$. \Snote{Do we have a term for consistent distributions? meaning that $(x,0)$ and $(x,1)$ cannot both be in the support set.}
We construct a family of learning problems $\problem_n$ with a family of classifiers $h_n$. In our construction we use a signature scheme $(\KGen,\Sign,\verify)$ for which the bit-length of $\vk$ is $\lambda$ and the bit-length of signature is $\ell(\lambda)=\polylog(\lambda)$ and an error correction code $(\encode, \decode)$ with code rate $\mathsf{cr}=\Omega(1)$ and error rate $\mathsf{er}=\Omega(1)$.
\begin{enumerate}
    \item The space of instances for $\problem_n$ is $\cX_n = \bits^{2n + n\cdot\ell(n)}$.
    \item The set of labels is $\cY_n = \bits$. 
    \item The distribution $D_n$ is defined as follows: first sample $(x,y)\gets D$, then sample $(\sk, \vk)\gets \KGen(1^{n\cdot\mathsf{cr}})$ and compute $[\vk] = \encode(\vk)$. Then compute $[x] = \encode(x)$. If $y=0$ sample a random $\sigma\gets \bits^{\ell(n)}$ that is not a valid signature of $x$ w.r.t $vk$. Then output $(([x],\sigma^n,[\vk]), 0)$. Otherwise compute $\sigma \gets \Sign(\sk, x)$ and output $(([x],\sigma^n, [\vk]), 1)$.
    \item The classifier $h_n \colon \cX_n \to \cY_n$ is defined as
    \begin{align*}
        h_n(x',\sigma_1,\dots,\sigma_n,\vk') = \begin{cases} 1 & \text{if~~} \exists i\in [n]; \verify\left(\decode(\vk'), \decode(x'), \sigma_i \right),\\
        0 & \textit{otherwise.}
        \end{cases}
    \end{align*}
\end{enumerate}
\end{construction}

\begin{theorem}\label{thm:comprobhyp3}
For family $\problem_n$ of Construction \ref{const:Learningfamily3}, the family of classifiers $h_n$ has risk $0$ and is computationally robust with risk at most $0$ against adversaries of budget $\mathsf{er}\cdot n$. On the other hand $h_n$ is not robust against information theoretic adversaries of budget $\ell(n)$: \begin{equation*}
    \AdvRisk_{\problem_n,\ell(n)}(h_n) \geq 1/2.
\end{equation*}
Note that reaching adversarial risk $1/2$ makes the classifier's decisions meaningless as a random coin toss achieves this level of accuracy.
\end{theorem}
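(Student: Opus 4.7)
\textbf{Proof Proposal for Theorem \ref{thm:comprobhyp3}.} The plan is to establish the three parts of the statement separately, exploiting the structure that there are $n$ independent copies of the signature $\sigma$ while $[x]$ and $[\vk]$ are protected by an error correcting code of rate $\mathsf{er}$.

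First, for the risk claim, I would argue directly from the sampling procedure. When $y=1$, the sampler outputs $(([x], \sigma^n, [\vk]), 1)$ with $\sigma \gets \Sign(\sk, x)$, so $\verify(\vk, x, \sigma) = 1$ by completeness of the signature scheme, and hence $\decode([\vk]) = \vk$ and $\decode([x]) = x$ give $h_n([x], \sigma^n, [\vk]) = 1$. When $y = 0$, $\sigma$ is explicitly chosen to be an invalid signature, so the verify check fails for every $i$ and $h_n$ outputs $0$. Thus $\Risk_{\problem_n}(h_n) = 0$.

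Second, for computational robustness with budget $\mathsf{er} \cdot n$, I would split into the two label cases. If $y = 1$, then any adversary with budget $\mathsf{er} \cdot n$ cannot change either $\decode([x'])$ or $\decode([\vk'])$, so these remain $x$ and $\vk$. To make $h_n$ output $0$, the adversary must invalidate every one of the $n$ copies of $\sigma$, which requires flipping at least one bit in each copy; but this would cost at least $n > \mathsf{er} \cdot n$ bits, so some copy remains equal to $\sigma$ and still verifies. If $y = 0$, then again $\decode([x'])$ and $\decode([\vk'])$ stay as $x$ and $\vk$, so to make $h_n$ output $1$ the adversary must produce some $\sigma_i'$ with $\verify(\vk, x, \sigma_i') = 1$; this is a (strong) existential forgery and can be ruled out via a standard reduction to the unforgeability of the signature scheme, showing any polynomial-size adversary succeeds with at most negligible probability.

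Third, for the information-theoretic attack with budget $\ell(n)$, I would exhibit an unbounded adversary that succeeds with probability at least $1/2$. On input $(([x], \sigma^n, [\vk]), y)$, the adversary decodes $\vk = \decode([\vk])$ and $x = \decode([x])$; if $y=0$, it performs an exhaustive search to find any $\sigma^* \neq \sigma$ with $\verify(\vk, x, \sigma^*) = 1$ (this $\sigma^*$ exists because the signing key for $\vk$ produces such a value), and overwrites the first block of signatures, changing $\sigma_1$ to $\sigma^*$ by flipping at most $\ell(n)$ bits. The resulting instance verifies on coordinate $1$, so $h_n$ outputs $1 \neq y = 0$. Since $y = 0$ occurs with probability $1/2$, this shows $\AdvRisk_{\problem_n, \ell(n)}(h_n) \geq 1/2$.

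The main obstacle in this proof is the computational-robustness argument in the $y=1$ case: the clean counting bound ``flipping at least one bit per signature costs at least $n$ bits'' only works because the $n$ copies are stored in disjoint positions and the error correcting code guarantees that $\decode([x'])$ and $\decode([\vk'])$ do not shift under budget $\mathsf{er} \cdot n$, so I would want to state this invariance explicitly as a preliminary lemma before invoking it in the case analysis.
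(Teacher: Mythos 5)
Your proposal is correct and follows essentially the same route as the paper's proof: risk $0$ by completeness of the signature and the explicit invalidity of $\sigma$ in the $y=0$ case; computational robustness by observing that the error-correcting code pins down $\decode([x'])=x$ and $\decode([\vk'])=\vk$ under budget $\mathsf{er}\cdot n$, that flipping all $n$ disjoint signature copies exceeds the budget (so $y=1$ cannot be attacked), and that the $y=0$ case reduces to a forgery; and the information-theoretic attack by exhaustively forging a signature and overwriting a single $\sigma$-block, which succeeds exactly when $y=0$, i.e., with probability $1/2$. Your idea of isolating the ``decoding is invariant under budget $\mathsf{er}\cdot n$'' claim as a standalone lemma is a clean organizational refinement but does not change the substance of the argument.
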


\begin{proof}
First it is clear that for problem $\problem_n$ we have
$\Risk_{\problem_n}(h_n) = 0$.
Now we prove the computational robustness of $h_n$.
\begin{claim}\label{clm:04}
For family $\problem_n$, and for any polynomial $s(\cdot)$ there is a negligible function $\negl$ such that for all $n \in \N$ 
\begin{equation*}\AdvRisk_{\problem_n, \mathsf{er}\cdot n,s}(h_n) \leq \negl(n).\end{equation*}
\end{claim}
\begin{proof}
Similar to proof of Claim \ref{clm:02} we prove this based on the security of the signature scheme. Let $\adv_{\set{{n\in \N}}}$ be the family of circuits maximizing the adversarial risk for $h_n$ for all $n \in \N$. We build a sequence of circuits $\adv^1_{\set{{n\in \N}}}$ and $\adv^2_{\set{{n\in \N}}}$ such that $\adv^1_n$ and $\adv^2_n$ are of size at most $s(n) + \poly(n)$. $\adv^1_n$ just asks the signature for $0^{\mathsf{cr}\cdot n}$. $\adv^2_n$ gets $\vk$ and does the following: It first samples $(x,y)\gets D$, computes encodings $[x] = \encode(x)$ and $[\vk]=\encode(\vk)$ and if $y=0$, it samples a random $\sigma\gets\bits^{\ell(n)}$ then calls $\adv_n$ on input $([x],\sigma^n, [\vk])$ to get $(x', (\sigma_1,\dots,\sigma_n), \vk')\gets \adv_n(([x],\sigma^n, [\vk]), y)$. Then it checks all $\sigma_i$'s and if there is any of them that $\verify(\vk,\sigma_i, x)=1$ it outputs $(x,\sigma_i)$, otherwise it aborts and outputs $\bot$. If $y=0$ it aborts and outputs $\bot$. Note that $\adv^2_n$ can provide all the oracles needed to run $\adv_n$ if the sampler from $D$, $h$ and $c$ are all computable by a circuit of polynomial size. Otherwise, we need to assume that our signature scheme is secure with respect to those oracles and the proof will follow. We have,
\begin{align*}
&\AdvRisk_{\problem_n, \mathsf{er}\cdot n, s}(h_n) \\
&= \Pr[(([x],\sigma^n, [\vk]),y)\gets D_n;~ (x',(\sigma_1,\dots,\sigma_n), \vk')\gets A_n(([x],\sigma^n, [\vk]),y));~\\
&~~~~~~~~(x',(\sigma_1,\dots,\sigma_n), \vk') \in \HD_{\mathsf{er}\cdot n}([x],\sigma^n, [\vk]) \wedge h_n(x',(\sigma_1,\dots,\sigma_n), \vk') \neq y].
\end{align*}
Because of the error rate of the error correcting code, $(x',(\sigma_1,\dots,\sigma_n), \vk') \in \HD_{\mathsf{er}\cdot n}(x,\sigma^n, [\vk])$ implies that $\decode(\vk') = \vk$ and $\decode(x') = x$. Also $h_n(x',(\sigma_1,\dots,\sigma_n), \vk')\neq y$ implies that $y=0$. This is because if $y=1$, the adversary has to make all the signatures invalid which is impossible with tampering budget $\mathsf{cr}\cdot n$. Therefore $y$ must be $1$ and one of the signatures in $(\sigma_1,\dots,\sigma_n)$ must pass the verification because the prediction of $h_\lambda$ should be $1$. Therefore we have
\begin{align*}
\AdvRisk_{\problem_n,\mathsf{er}\cdot n, s}(h_n) &\leq \Pr[((x,\sigma^n, [\vk]),y)\gets D_n;~ (x',(\sigma_1,\dots,\sigma_n), \vk')\gets A((x,\sigma, [\vk]),y));~\\
&~~~~~~~~y=0 \wedge \exists i \verify(\vk,\sigma_i,x)]\\
&\leq \Pr[(\sk,\vk)\gets \KGen(1^n);~ 0^{\mathsf{cr}\cdot n}\gets \adv_1(1^n,\vk);~ \sigma\gets \Sign(\sk,0^{\mathsf{cr}\cdot n});~ 
    \\&~~~~~~~~~~~(x,\sigma_i) \gets \adv_2(\vk);~\verify(\vk,x,\sigma_i)]
\end{align*}
Thus, by the unforgeability of the one-time signature scheme we have
\begin{equation*}\AdvRisk_{\problem_n, \mathsf{er}\cdot n,s}(h_n)\leq \negl(n).\end{equation*}
\end{proof}
Now we show that $h_n$ is not robust against computationally unbounded attacks.
\begin{claim}\label{clm:05}
For family $\problem_n$ and any $n \in \N$ we have
$$\AdvRisk_{\problem_n, \ell(n)}(h_n)= 0.5.$$
\end{claim}

\begin{proof}
For any $(([x],\sigma^n, [\vk]), y)$ define $A([x],\sigma^n,[\vk])$ as follows: If $y=1$, $A$ does nothing and outputs $([x],\sigma^n, [\vk])$. If $y=0$, $A$ search all possible signatures to find a signature $\sigma'$ such that $\verify(\vk,\sigma',x) =1 $. It then outputs $([x],(\sigma',\sigma^{n-1}), [\vk])$. Based on the fact that the size of signature is $\ell(n)$, we have $\HD((x, (\sigma',\sigma^{n-1}), [\vk]) , (x,\sigma^n,[\vk])) \leq \ell(n).$ Also, it is clear that $h_n(x, (\sigma',\sigma^{n-1}), [\vk]) = 1$ because the first signature is always a valid signature. Therefore we have
\begin{align*}
    \AdvRisk_{\problem_n, \ell(n)}(h_n)
    &\geq \Pr[(([x],\sigma^n, [\vk]),y)\gets D_n; h(A(([x],\sigma^n, [\vk]))) \neq y ]\\
    &= \Pr[(([x],\sigma^n, [\vk]),y)\gets D_n; 1 \neq y ]\\
    & = 0.5.
\end{align*}
\end{proof}
This concludes the proof of Theorem \ref{thm:comprobhyp3}.
\end{proof}
\section{Average-Case Hardness of \textbf{NP} from  Computational Robustness} \label{sec:NP}

In this section, we show a converse result to those in Section \ref{sec:SepWith}, going from useful computational robustness to deriving computational hardness. Namely, we show that if for there is a learning problem  whose computational risk is noticeably more than its information theoretic risk, then $\NP$ is hard even  on average.

\begin{definition}[Hard samplers for $\NP$] For the following definition, A Boolean formula $\phi$ over some Boolean variables $x_1,\dots,x_k$ is satisfiable, if there is an assignment to $x_1,\dots,x_k \in \bits$, for which $\phi $ evaluates to $1$ (i.e, TRUE). We use some standard canonical encoding of such Boolean formulas and fix it, and we refer to $|\phi|$, the size of $\phi$, as the bit-length of this representation for formula $\phi$. Let $\SAT$ be the language/set  of  all satisfiable Boolean formulas. Suppose $S(1^n,r)$ is a polynomial time randomized algorithm that takes $1^n$ and randomness $r$, runs in time $\poly(n)$, and outputs Boolean formulas of size $\poly(n)$. We call $S$ a \emph{hard (instance) sampler} for $\NP$ if,
\begin{enumerate}
    \item For a negligible function $\negl$ it holds that  $\Pr_{\phi \gets S}[\phi \in \SAT]= 1-\negl(n).$
    \item For every poly-size circuit $\adv$, there is a negligible  function  $\negl$, such that
    $$\Pr_{\phi \gets S, t \gets \adv(\phi)}[ \phi(t)=1]= \negl(n).$$
\end{enumerate}
\end{definition}
The following theorem is stated for computationally robust learning, but the same proof holds for  computationally robust hypotheses as well.
\begin{theorem}[Hardness of $\NP$ from computational robustness] \label{thm:NP}
Let $\problem_n=(\X_n,\Y_n,D_n,\H_n)$  be a learning problem. Suppose there is a (uniform) learning algorithm $L$  for $\problem_n$  such that:
\begin{enumerate}
    \item $L$ is \emph{computationally} robust with risk at most $\alpha$ under $b(n)$-perturbations.
    \item $\AdvRisk_{\problem_n, b(n)}(L)\geq \beta(n)$; i.e., information-theoretic adversarial risk of $L$  is at least $\beta(n)$.  
    \item $\beta(n)-\alpha \geq \eps$ for $\eps= 1/\poly(n)$.
    \item $D_n$ is efficiently samplable by algorithm $S$.
    \item For any $x,x' \in \cX_n$ checking $\metric(x,x')\leq b(n)$ is possible in polynomial time.
\end{enumerate}
Then, there is a hard sampler for $\NP$.
\end{theorem}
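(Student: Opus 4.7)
The plan is to convert the polynomial gap $\beta - \alpha \geq \eps$ between information-theoretic and computational adversarial risk into a hard $\NP$ sampler by producing Boolean formulas whose satisfying assignments are exactly the valid adversarial outputs of the game of Definition~\ref{def:game}. Concretely, I would first define a single-instance sampler $S_0(1^n)$: invoke the efficient sampler for $D_n$ to obtain $m$ training examples, run $L$ to produce a hypothesis $h$, sample a fresh test example $(x,y) \gets D_n$, and output the Boolean formula
\[
\phi_{h,x,y}(x') \;=\; [\metric(x,x') \leq b(n)] \,\wedge\, [h(x') \neq y] \,\wedge\, [h(x') \neq \star].
\]
Assumptions 4 and 5 of the theorem, together with $L$ being polynomial time (so $h$ is a polynomial-size circuit that can be inlined into the formula), ensure that $\phi_{h,x,y}$ has polynomial size and that $S_0$ is a uniform polynomial-time randomized algorithm.

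Two basic properties of $S_0$ then follow. First, $\Pr_{\phi \gets S_0}[\phi \in \SAT] = \AdvRisk_{\problem_n,b(n)}(L) \geq \beta$, since a satisfying assignment is exactly a valid adversarial example. Second, for every polynomial-size circuit $\adv$, the reduction that reads $(h,x,y)$ out of $\phi$ and returns $\adv(\phi)$ is itself a polynomial-size attacker for $\problem_n$, so by computational robustness the find probability is at most $\alpha + \negl(n)$. Hence $S_0$ already exhibits a noticeable $\eps$-gap between SAT probability and bounded-find probability.

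To lift this into the $1-\negl$ versus $\negl$ thresholds demanded by the hard-sampler definition, I would compose $k = \Theta(n/\eps)$ independent copies into $\Phi = \bigvee_{j=1}^{k} \phi_j$, using a fresh learning setup and test example in each round. Independence gives $\Pr[\Phi \in \SAT] \geq 1 - (1-\beta)^k \geq 1 - \negl(n)$, satisfying the SAT condition. For hardness, given $\adv$ that solves $\Phi$ with probability $\gamma \geq 1/\poly(n)$, I would build an attacker $\adv'$ for $\problem_n$ that places its challenge in a uniformly random slot $j^* \in [k]$, honestly simulates the remaining $k-1$ rounds, invokes $\adv$, and outputs the extracted witness whenever $\adv$'s answer lies in slot $j^*$; by exchangeability across the $k$ i.i.d.\ rounds this succeeds with probability $\gamma/k = 1/\poly(n)$.

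The step I expect to be the main obstacle is driving the bounded-find probability all the way down to full negligibility rather than only $1/\poly$, because OR composition intrinsically makes per-slot witness-finding easier for a bounded attacker. My plan to close this gap is to combine the reduction with non-uniform advice given by a near-optimal bounded attacker $\adv^*$ for $\problem_n$ (which exists because $\alpha$ is the supremum over polynomial-size adversaries): $\adv'$ first runs $\adv^*$ on its challenge and invokes the $\NP$ reduction only when $\adv^*$ fails, contributing an extra $(1-\alpha)\gamma/k$ above the baseline $\alpha$ and thus contradicting $\AdvRisk \leq \alpha + \negl$. Iterating this argument, in the spirit of Yao's weak-to-strong one-way-function amplification adapted to our search problem, drives the bounded-find probability below every inverse polynomial, yielding the required negligibility.
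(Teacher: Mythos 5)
Your single-instance sampler $S_0$ is exactly the paper's $S_1$, and the two basic properties you state for it (SAT probability at least $\beta$, bounded find-probability at most $\alpha+\negl$) are correct and match the paper. The problem is the amplification step.

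Your OR composition $\Phi = \bigvee_{j=1}^k \phi_j$ is the wrong combinator, and this is a real gap, not a detail to iron out. Consider the trivial polynomial-size attacker that runs the baseline bounded attacker independently on each slot and returns the first success it finds. Since the per-slot bounded-find probability can be as large as $\alpha$, and $\alpha$ may well be a constant (the theorem only assumes $\beta-\alpha\geq\eps$), this attacker produces a witness for $\Phi$ with probability at least $1-(1-\alpha)^k$, which for $k=\Theta(n/\eps)$ is $1-\negl(n)$. So $\Phi$ is already easy: a hard sampler for $\NP$ must have find-probability $\negl$, and your $\Phi$ has find-probability $\approx 1$. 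Your salvage attempt does not repair this, because the additivity $\alpha + (1-\alpha)\gamma/k$ presumes independence between the event that $\adv^*$ fails on the challenge and the event that the random-slot reduction returns a witness for the challenge slot, but both depend on the same $(h,x,y)$. The sound bound is only $\Pr[\adv'\text{ succeeds}] \geq \max(\alpha,\gamma/k)$, and with $\gamma\approx 1$ trivially achievable and $k$ polynomial, this is just $\alpha$ whenever $\alpha > 1/k$ — no contradiction. Iteration cannot escape this because the obstruction is unconditional: the OR sampler is genuinely not hard, so no reduction argument can show that it is.

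The paper instead uses a \emph{threshold} combinator: $S_2$ samples $k=\Theta(n/\eps^2)$ copies $\phi_1,\dots,\phi_k$ of $S_1$ and outputs a formula that is satisfiable iff at least $\tau=(\alpha+\beta)/2$ of them can be simultaneously satisfied. Since the per-slot SAT probability is $\geq\beta>\tau$, Chernoff gives SAT probability $1-e^{-\Omega(n)}$; and since the per-slot bounded-find probability is $\leq\alpha<\tau$, a random-slot direct-product argument shows any poly-size solver succeeds only with probability $\leq 1-\eps/3$. That weak bound is then boosted to full negligibility via the amplification-of-verifiable-puzzles lemma (a further independent direct product). The threshold at $\tau$ strictly between $\alpha$ and $\beta$ is precisely what makes the baseline per-slot strategy insufficient, and it is the ingredient missing from your OR approach.
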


Before proving Theorem \ref{thm:NP}, we recall a useful lemma.
The same proof of   amplification of (weak to strong) one-way functions by \citet{yao1982theory} and described in \citep{goldreich2007foundations}, or the parallel repetition of verifiable puzzles \citep{canetti2005hardness,holenstein2011general}  can be used to prove the following lemma.

\begin{lemma}[Amplification of verifiable puzzles] \label{lem:AmpPuz} Suppose $S$ is a distribution over Boolean formulas such that for every poly-size adversary $\adv$, for sufficiently large $n$, it holds that solving the puzzles generated by $S$ are weakly hard. Namely,
$\Pr_{\phi \gets S(1^n,r_1)}[\phi(t)=1; t \gets \adv(\phi)] \leq \eps$
for $\eps=1/\poly(n)$. Then, for any polynomial-size adversary $\adv$, there is a negligible function $\negl$, such that the probability that $\adv$ can simultaneously solve \emph{all} of $k=n/\eps$ puzzles $\phi_1,\dots,\phi_k$ that are independently sampled from $S$ is at most $\negl(n)$. 
\end{lemma}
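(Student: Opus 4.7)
My plan is to prove the lemma by contradiction using the standard direct-product argument for \emph{verifiable} puzzles, in the spirit of Yao's weak-to-strong amplification of one-way functions~\citep{yao1982theory,goldreich2007foundations} and the parallel-repetition analyses of puzzle systems~\citep{canetti2005hardness,holenstein2011general}. The intermediate statement I would target is the direct-product bound: if every polynomial-size single-puzzle solver succeeds on $\phi\gets S$ with probability at most $\eps$, then for every polynomial-size $\adv^*$ the probability of simultaneously solving $k$ independent puzzles from $S^k$ is at most $\eps^k\cdot\poly(n)+\negl(n)$. Plugging in $k=n/\eps$ and noting that $\eps^{n/\eps}\le \exp(-\Omega(n\ln(1/\eps)/\eps))=n^{-\omega(1)}$ (using $\eps=1/\poly(n)<1$) immediately yields the desired negligible bound.

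To establish the intermediate direct-product bound, suppose toward contradiction that there is a polynomial-size $\adv^*$ and a polynomial $q$ with
\begin{equation*}
\Pr\bigl[\phi_1,\dots,\phi_k\gets S;\,(t_1,\dots,t_k)\gets\adv^*(\phi_1,\dots,\phi_k);\,\forall i,\;\phi_i(t_i)=1\bigr]\;\ge\; \eps^k\cdot q(n)
\end{equation*}
for infinitely many $n$. I would use $\adv^*$ to build a polynomial-size single-puzzle attacker $\adv$ whose success probability on $\phi\gets S$ exceeds $\eps$, contradicting the weak-hardness hypothesis. Given a challenge $\phi$, $\adv$ picks a slot $i\in[k]$, samples $k-1$ fresh ``dummy'' puzzles from $S$ into the other slots, runs $\adv^*$ on the resulting $k$-tuple (which is distributed identically to $S^k$), and considers the returned component $t_i$; verifiability lets $\adv$ check $\phi(t_i)\stackrel{?}{=}1$ in polynomial time. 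A chain-rule / averaging argument across the $k$ coordinates shows that at least one coordinate $i^{*}$ must have a conditional single-slot success probability at least $\eps\cdot q(n)^{1/k}>\eps$, so $\adv$ uses Holenstein-style empirical estimation --- polynomially many re-runs of $\adv^*$ on fresh dummies, combined with Chernoff --- to locate such a coordinate and commit to it, yielding single-puzzle success probability strictly larger than $\eps$.

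The main obstacle in carrying out this plan is turning the purely existential ``some coordinate is good'' statement into a uniformly efficient algorithm, because a priori $\adv$ does not know which slot has large conditional success probability. This is precisely where verifiability of the puzzles is indispensable: it lets $\adv$ \emph{test} candidate slots using polynomially many independent re-runs of $\adv^*$, estimate conditional success probabilities empirically, and make its slot choice data-dependent rather than oblivious. The remaining calculations --- bounding the estimation error, the polynomial blow-up in $\adv$'s size, and the final union bound over $k=n/\eps$ slots --- are standard and are exactly what is handled in the cited amplification theorems, so I would invoke those analyses rather than redo them from scratch.
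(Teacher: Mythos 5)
The paper does not supply a standalone proof of this lemma; it only remarks, immediately before the statement, that the argument used for Yao's weak-to-strong amplification of one-way functions \citep{yao1982theory,goldreich2007foundations}, or for parallel repetition of weakly verifiable puzzles \citep{canetti2005hardness,holenstein2011general}, ``can be used to prove the following lemma.'' Your proposal spells out exactly that standard argument --- reducing a $k$-fold solver with success roughly $\eps^k$ to a single-puzzle solver with success exceeding $\eps$ via slot planting plus verifier-aided estimation, then observing $\eps^{n/\eps}\cdot\poly(n)$ is negligible for $\eps=1/\poly(n)$ --- and explicitly delegates the core estimation machinery to the cited theorems, which is precisely what the paper intends.

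One caution about the framing, since you would be writing the proof rather than the paper: the ``chain-rule / averaging'' step as literally written does not by itself yield the reduction. Conditioning on the event that $\adv^{*}$ succeeded on earlier slots is not an efficiently samplable condition, and the naive planting reduction (plant $\phi$ in a uniformly random slot, run $\adv^{*}$ once, output its $i$-th answer) only guarantees single-slot success $\ge \eps^k\cdot q(n)$, which is far below $\eps$ for the parameters at hand. The entire content of the Canetti--Halevi--Steiner / Holenstein--Schoenebeck arguments is to close this gap by repeatedly re-sampling the dummy slots, using the public verifier to certify which tuples to keep, and making the slot selection data-dependent; the existence of a ``good coordinate'' in the sense usable by the reduction is itself a nontrivial conclusion of that machinery, not a one-line consequence of factoring a joint probability into conditional factors. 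You correctly flag this as the step you would invoke rather than reprove, so the proposal is correct as a plan, but the chain-rule sentence should be read as intuition for why the estimation works rather than as a proof step. Also, the precise shape of the intermediate direct-product bound in the literature is not exactly $\eps^k\cdot\poly(n)+\negl(n)$ --- it is more like $(\eps+\gamma)^k+\negl(n)$ for an arbitrary inverse-polynomial slack $\gamma$ --- but with $k=n/\eps$ either form gives the required negligible bound.
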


\begin{proof} (of Theorem \ref{thm:NP}.)
First consider the following sampler $S_1$. (We will modify $S_1$ later on). 
\begin{enumerate}
    \item Sample $m$ examples $\cS \gets D_n^m$.
    \item Run $L$ to get  $h \gets L(\cS)$.
    \item Sample another $(x,y) \gets D_n$
    \item Using the Cook-Levin reduction, get a Boolean formula $\phi=\phi_{h,x,y}$ such that $\phi \in \SAT$, if (1) $\metric(x',x')\leq b(n)$ and (2) $h(x') \neq y$. This is possible because using $h,x,y$, both conditions (1) and (2) are efficiently checkable. 
    \item Output $\phi$.
\end{enumerate} 
By the assumptions of Theorem \ref{thm:NP}, it holds that 
$\Pr_{\phi \gets S_1}[\phi \in \SAT] \geq \beta(n)$
while for any poly-size algorithm $\adv$, it holds that 
$\Pr_{\phi \gets S_1, t \gets \adv(\phi)}[ \phi(t)=1]\leq \alpha$.
So, $S_1$ almost gets the conditions of a hard sampler for $\NP$, but only with a weak sense.

Using standard techniques, we can amplify the $\eps$-gap between $\alpha,\beta(n)$. The algorithm $S_2$ works as follows. (This algorithm assumes the functions $\alpha,\beta(n)$ are efficiently computable, or at least there is an efficiently computable threshold $\tau \in [\alpha+1/\poly(n),\beta(n)-1/\poly(n)]$.)
\begin{enumerate}
    \item For $k=n/\eps^2$, and all $i \in [k]$, get $\phi_i \gets S_1$.
    \item Using the Cook-Levin reduction get a Boolean formula $\phi=\phi_{\phi_1,\dots,\phi_k}$ such $\phi \in \SAT$, if there is a solution to satisfy at least $\tau = (\alpha+\beta(n))/2$ of the formulas $\phi_1,\dots,\phi_k$. More formally, $\phi \in \SAT$, if there is a vector $\ol{t}=(t_1,\dots,t_k)$ such that $|\set{i \colon \phi_i(t_i)=1}|\geq \tau$. This is possible since verifying $\ol{t}$ is efficiently possible.
\end{enumerate}
By the Chernoff-Hoeffding bound, 
\begin{equation*}\Pr_{\phi \gets S_2}[\phi \in \SAT] \geq 1-e^{-(\eps/2)^2 \cdot n/\eps^2} \geq 1-e^{-n/4}.\end{equation*}

Proving the second property of the hard sampler $S$ is less trivial, as it needs an efficient \emph{reduction}. However, we can apply a weak bound here and then use Lemma \ref{lem:AmpPuz}. We first claim that for any poly-size adversary $\adv$, 
\begin{equation} \label{eq:weakHard}
\Pr_{\phi \gets S_2, t \gets \adv(\phi)}[\phi(t)=1] \leq 1-\eps/3.
\end{equation}
To prove Equation \ref{eq:weakHard}, suppose for sake of contradiction that there is such adversary $\adv$. We can use $\adv$ and solve $\phi' \gets S_1$ with probability more than $\alpha+\Omega(\eps)$ which is a contradiction. Given $\phi'$, The reduction is as follows.
\begin{enumerate}
    \item Choose $i \gets [k]$ at random.
    \item Sample $k-1$ instances $\phi_1,\dots,\phi_{i-1}, \phi_{i+1},\dots,\phi_k \gets S_1$ independently at random.
    \item Let $\phi_i =\phi'$.
    \item Ask $\adv$ to solve $\phi_{\phi_1,\dots,\phi_k}$, and if $\adv$'s answer gave a solution for $\phi_i=\phi'$, output this solution.
\end{enumerate}
Since $\adv$ cannot guess $i$, a simple argument shows that the above reduction succeeds with probability $\alpha+\eps/2-\eps/3 = \alpha + \eps/6$.
Now that we have a puzzle generator $S_2$ that has satisfiable puzzles with probability $1-\negl(n)$ and efficient algorithms can solve its solutions by probability at most $\eps/2$, using Lemma \ref{lem:AmpPuz}, we can use another direct product and design sampler $S$ that samples $2n/\eps$ independent instances from $S_2$ and asks for solutions to all of them. Because we already established that $\Pr_{\phi \gets S_2}[\phi \in \SAT] \geq 1-\negl(n)$, the puzzles sampled by $S$ are also satisfiable by probability $1-n \cdot \negl(n) = 1-\negl(n)$, but efficient algorithms  can still find the solution only with  probability that is $\negl(n)$.
\end{proof}

\section{Conclusion}
\label{sec:conc}
The assumption  of computationally-bounded adversaries has been the key to modern cryptography. In fact, without this assumption modern cryptographic primitives would not be possible. This paper investigates whether this assumption helps in the context of robust learning and demonstrates that is indeed the case (i.e., computational hardness can be leveraged in robust learning).  We hope that this work is the first-step in leveraging computational hardness in the context of robust learning. 

Several intriguing questions remain, such as:

\begin{itemize}
\item Our Construction \ref{thm:comprobhyp} takes a natural learning problem, but then it modifies it. Can computational robustness be achieved for natural problems, such as image classification?  
\item Theorem \ref{thm:NP} shows that computational hardness is necessary for nontrivial computational robustness. However, this does not still mean we can get
cryptographic primitives back from such problems. Can we  obtain cryptographically useful primitives, such as \emph{one-way functions}, from such computational robustness? 
\end{itemize}



\appendix
\section{Useful Tools}\label{sec:tools}
Here, we define the notions of one-way function, one-time signature and error correcting code.

\begin{definition}[One-way function] A function $f\colon \set{0,1}^* \to \set{0,1}^*$ is one-way if it can be computed in polynomial time and the inverse of $f$ is hard to compute. Namely, there is a polynomial time algorithm $M$ such that
$$\Pr[x\gets \set{0,1}^n; M(x)=f(x)] =1$$
and for any polynomial time algorithm $\adv$ there is a negligible function $\negl(\cdot)$ such that we have,
$$\Pr[x\gets \set{0,1}^n; y=f(x); f(\adv(y))=x] \leq \negl(|x|).$$
\end{definition}
The assumption that one-way functions exist is standard and omnipresent in cryptography as a \emph{minimal} assumption, as many cryptographic tasks imply the existence of OWFs \citep{goldreich2007foundations,katz2014introduction}.

\begin{definition}[One-time signature schemes] \label{def:sign} A one-time signature scheme $S=(\KGen, \Sign, \verify)$ consists of three probabilistic
polynomial-time  algorithms as follows:
\begin{itemize}
    \item $\KGen(1^\lambda)\footnote{\text{By $1^\lambda$} we mean an string of bits of size $\lambda$ that is equal to 1 at each location. Note that $\lambda$ is the security parameter that controls the security of the scheme. As $\lambda$ increases  the task of finding a forgery for a signature becomes harder.} \to (\sk,\vk)$
    \item $\Sign(\sk, m) \to \sigma$
    \item $\verify(\vk, \sigma, m) \to \set{0,1}$
\end{itemize}
which satisfy the following properties:
\begin{enumerate}
    \item {\bf Completeness:} For every $m$
    \begin{align*}
        \Pr[&(sk,vk)\gets \KGen(1^\lambda); \sigma \gets \Sign(sk, m);\\&~~~~\verify(\vk, \sigma, m) = 1] =1.
    \end{align*}
    \item {\bf Unforgeability:} For every positive polynomial $s$, for every $\lambda$ and every pair of circuits $(A_1,A_2)$ with size $s(\lambda)$ the following probability is negligible in $\lambda$:
    \begin{align*}
        \Pr[&(\sk,\vk)\gets \KGen(1^\lambda); \\
        &(m,st)\gets A_1(1^\lambda, \vk);\\ & \sigma \gets \Sign(\sk,m); \\&(m', \sigma')\gets A_2(1^\lambda, \vk, st, m,\sigma);\\&  m\neq m' \wedge \verify(\vk, \sigma',m')=1] \leq \negl(\lambda).
    \end{align*}
\end{enumerate}
\end{definition}

\begin{definition}[Error correction codes] An error correction code with code rate $\alpha$ and error rate $\beta$ consists of two algorithms $\encode$ and $\decode$ as follows.
\begin{itemize}
    \item The encode algorithm $\encode$ takes a Boolean string $m$ and outputs a Boolean string $c$ such that $|c| =  |m|/\alpha$.  
    \item The decode algorithm $\decode$ takes a Boolean string $c$ and outputs either $\bot$ or a Boolean string $m$. It holds that for all $m\in\set{0,1}^*$, $c=\encode(m)$ and
$c'$ where $ \HD(c,c') \leq {\beta\cdot |c|}$, it holds that $ \decode(c') = m.$
\end{itemize}

\end{definition}




\end{document}